\documentclass{article}
\usepackage{amsthm}
\newtheorem{definition}{Definition}

\usepackage{PRIMEarxiv}

\usepackage{balance} 
\usepackage{graphicx}
\usepackage{color}
\usepackage{algpseudocode}
\usepackage[linesnumbered,ruled,vlined]{algorithm2e}
\usepackage{multirow}
\usepackage{makecell}
\usepackage{caption}
\usepackage[utf8]{inputenc}

\usepackage[bb=dsserif]{mathalpha}
\usepackage[bb=dsserif]{mathalpha}
\usepackage{bm}

\newcommand\IndicatorFunction{%
    \mathbb{1}
    }

\usepackage[T1]{fontenc}    
\usepackage{hyperref}       
\usepackage{url} 
\usepackage{booktabs}       
\usepackage{amsfonts}       
\usepackage{nicefrac}       
\usepackage{microtype}      
\usepackage{lipsum}
\usepackage{fancyhdr}       
\graphicspath{{media/}}     

\usepackage{comment}
\usepackage{amsmath}
\newtheorem{theorem}{Theorem}

\SetKwComment{Comment}{/* }{ */}
\SetKwFunction{BuildSFITable}{BuildSFITable}
\SetKwFunction{FirstSafeInterval}{FirstSafeInterval}
\SetKwFunction{Heuristic}{Heuristic}
\SetKwFunction{PrunedNeighbors}{PrunedNeighbors} 
\SetKwFunction{TravelTime}{TravelTime}
\SetKwFunction{GeomConflict}{GeomConflict}
\SetKwFunction{UpdatePath}{updatePath}
\SetKwFunction{DirectionSector}{DirectionSector}
\SetKwFunction{Neighbors}{Neighbors}
\SetKwFunction{RunSFIPP}{RunSFIPP-ST}
\SetKwFunction{sort}{sort}
\SetKwFunction{shuffle}{random\_shuffle}
\SetKwFunction{CreateStartNode}{CreateStartNode}
\SetKwFunction{BacktrackPath}{BacktrackPath} 
\SetKwFunction{NextWaitIntervalExists}{NextWaitIntervalExists}
\SetKwFunction{NextWaitInterval}{NextWaitInterval} 
\SetKwFunction{GeoConflict}{GeoConflict}

\title{Multi UAVs Preflight Planning in a Shared and Dynamic Airspace}

\author{
  Amath Sow, Mariusz Wzorek, Daniel de Leng, Mattias Tiger, Fredrik Heintz\\
  Linköping University, Sweden \\
  \texttt{\{amath.sow, mariusz.wzorek, daniel.de.leng, mattias.tiger, fredrik.heintz\}@liu.se} \\
   \And
  Mauricio Rodriguez, Christian Rothenberg \\
  Universidade Estadual de Campinas, Brazil  \\
  \texttt{\{m272321, chesteve\}@dac.unicamp.br} \\
  \And
  Fabíola M. C. de Oliveira \\
  Universidade Federal do ABC, São Paulo, Brazil  \\
  \texttt{fabiola.oliveira@ufabc.edu.br} 
}

\begin{document}
\maketitle

\begin{abstract}
Preflight planning for large-scale Unmanned Aerial Vehicle (UAV) fleets in dynamic, shared airspace presents significant challenges, including temporal No-Fly Zones (NFZs), heterogeneous vehicle profiles, and strict delivery deadlines. While Multi-Agent Path Finding (MAPF) provides a formal framework, existing methods often lack the scalability and flexibility required for real-world Unmanned Traffic Management (UTM). We propose DTAPP-IICR: a Delivery-Time Aware Prioritized Planning method with Incremental and Iterative Conflict Resolution. Our framework first generates an initial solution by prioritizing missions based on urgency. Secondly, it computes roundtrip trajectories using SFIPP-ST, a novel 4D single-agent planner (Safe Flight Interval Path Planning with Soft and Temporal Constraints). SFIPP-ST handles heterogeneous UAVs, strictly enforces temporal NFZs, and models inter-agent conflicts as soft constraints. Subsequently, an iterative Large Neighborhood Search, guided by a geometric conflict graph, efficiently resolves any residual conflicts. A completeness-preserving directional pruning technique further accelerates the 3D search. On benchmarks with temporal NFZs, \text{DTAPP-IICR} achieves near-100\% success with fleets of up to 1,000 UAVs and gains up to 50\% runtime reduction from pruning, outperforming batch Enhanced Conflict-Based Search in the UTM context. 
Scaling successfully in realistic city-scale operations where other priority-based methods fail even at moderate deployments, \text{DTAPP-IICR} is positioned as a practical and scalable solution for preflight planning in dense, dynamic urban airspace.
\end{abstract}

\keywords{Unmanned Aircraft System Traffic Management; PreFlight Planning in dynamic environment; Multi-Agent Path Finding; Heterogeneous Agents with kinemodynamic constraints; UAV; Drones; Route Planning; No-Fly Zones; UTM}

\section{Introduction}
The rapid growth of Unmanned Aerial Vehicle (UAV) operations in low-altitude airspace has significantly increased airspace density and complexity. UAVs are now deployed across a wide range of civilian and governmental applications, including package delivery, search and rescue, and surveillance~\cite{lastMilesDrone, SearchAndRescue, DroneAgri, Surveillance}. Global adoption is accelerating, with millions of UAVs expected to operate concurrently across the globe in the coming decades~\cite{UTM-USA1, UTM-USA, UTM-CHINA, SESAR}. This complexity highlights the need for Unmanned Traffic Management (UTM) systems to ensure conflict-free navigation in dense airspace.

In the context of UTM, UAVs are allowed to operate below 120 meters (400 feet), as stipulated by the European Union Aviation Safety Agency (EASA)~\cite{EASA}. The U-Space framework~\cite{EASA1} is specifically designed to ensure the safe integration of UAVs into the airspace, particularly in urban and complex environments. To achieve this objective in a shared airspace, an effective \emph{preflight planner} is essential. In this study, we focus on a service delivery scenario in which quadcopter UAVs transport goods from centralized hubs to target locations such as homes or hospitals. UAV Service Providers~(Hubs) periodically submit operation requests to the UTM system. The central controller processes these requests and approves flights based on compliance with operational requirements and airspace availability. Accepted flights are then authorized and coordinated within the UTM system to ensure safe and efficient operations.

Within this controlled urban environment, we model the problem as a Multi-Agent Path Finding (MAPF) variant in a 4D space (3D + time). Missions are prioritized by urgency and must strictly respect temporal No-Fly Zones (NFZs). Unlike standard point-to-point pathfinding, these operations require executing a complete roundtrip (outbound, wait, and return). This multi-stage structure is similar to Multi-Agent Pickup and Delivery (MAPD)~\cite{MAPD-Ma}, which generalizes MAPF to handle sequential tasks. However, our formulation differs by treating the entire roundtrip as a single atomic mission that must be planned holistically under strict 4D regulatory constraints, rather than as a sequence of tasks. Furthermore, existing work in the UTM context is largely limited to traditional batch methods, such as Conflict-Based Search (CBS) and Enhanced CBS (ECBS)~\cite{Extended-MAPF}, which lack the scalability and flexibility required for such dense and dynamic environments.

To address these challenges, we propose a new two-tier planning framework tailored to dynamic and high-density UAV operations. Our contributions are described as follows:

\begin{enumerate}
    \item  A Novel 4D Single-Agent Planner: We propose Safe Flight Interval Path Planning with Soft and Temporal Constraints (\text{SFIPP-ST}), which natively supports heterogeneous UAV profiles (varying start times, speeds, and sizes), strictly enforces temporal NFZs, and models inter-agent conflicts using soft, penalizing constraints;
    \item  A Scalable Prioritized Planning Framework: We introduce the Delivery-Time Aware Prioritized Planning method with Incremental and Iterative Conflict Resolution (\text{DTAPP-IICR}), an urgency-aware preflight planner for heterogeneous fleets. This framework employs an iterative conflict-resolution approach based on Large Neighborhood Search (LNS), enhancing scalability in dense airspace and replanning the most critical agents guided by a geometrical heuristic;
    \item Efficiency via Directional Pruning: To accelerate the path search in a 3D grid with 26-connected voxels, we develop a completeness-preserving directional pruning strategy that reduces the branching factor by prioritizing neighbors whose direction aligns with the goal vector, with a fallback mechanism that guarantees search completeness.
\end{enumerate}

Supported by its theoretical guarantees and validated through experimental evaluation, our approach offers a robust, scalable, and collision-free planning framework for high-density UAV deployments in shared, dynamic urban airspace. The implementation is made reproducible and available as an open-source project \footnote{URL: \url{https://github.com/amathsow/4DPlanning}}.

\section{Related Work} \label{sec:related-work}
MAPF is a core problem in multi-agent systems, modeling collision-free coordinated navigation of agents in shared environments~\cite{MAPF}.
Research in this area includes optimal and bounded-suboptimal solvers. Optimal solvers such as CBS~\cite{CBS} and Improved CBS (ICBS) \cite{ICBS} guarantee to find the least cost solution but often struggle to scale in dense environments, while bounded-suboptimal variants like ECBS~\cite{ECBS} and Explicit Estimation CBS~(EECBS)~\cite{EECBS} relax optimality to improve scalability. Prioritized Planning (PP)~\cite{PP} improves scalability by sequentially planning agents according to predefined orders, while Priority-Based Search (PBS)~\cite{ma2019searching} dynamically explores priority orderings. While these approaches are formulated for general graph problems, they do not fully address UTM-specific challenges.

Anytime approaches, particularly those based on LNS~\cite{LNS, MLLNS, LNS2, Balance24, ADDRESS}, have been shown to scale well by iteratively destroying and repairing portions of candidate solutions. LNS2~\cite{LNS2}, in particular, has demonstrated significant success in grid benchmarks by utilizing Safe Interval Path Planning with Soft constraints~(SIPPS)~\cite{LNS2} to resolve conflicts. These ideas inspire our framework. However, standard LNS2 typically rely on discrete-time vertex/edge conflicts, homogeneous agents, and synchronous start times. Our framework, DTAPP-IICR, adapts the LNS2 architecture but fundamentally redesigns the conflict model and repair operators. Unlike the discrete operator, our approach identifies conflicts using continuous-time geometric envelopes to handle heterogeneous agent sizes and speeds. Furthermore, while LNS2 utilizes SIPPS with reservation tables, our single-agent planner, SFIPP-ST, constructs safe intervals directly from dynamic NFZ constraints and geometric conflict checking, as interval reuse is invalid with heterogeneous velocities.

In UTM-specific contexts, only a few studies have directly addressed preflight multi-UAV planning. Batch extensions of CBS and ECBS~\cite{Extended-MAPF} support heterogeneous agents but assume symmetric roundtrips and cannot accommodate temporal NFZs. More recently, Rolling Continuous-time CBS~(RCCBS)~\cite{RCCBS} extended MAPF to continuous time by decomposing tasks into windowed batches and resolving conflicts through geometric overlap detection between agents. While RCCBS enables dynamic task arrivals and continuous-space planning, it remains constrained to semi-urban environments with static airspace conditions. Other studies have considered heterogeneous edges and roundtrips~\cite{MAPF-HR}, or combined takeoff scheduling with speed adjustment~\cite{TakeoffSpeedAdjutement}, to improve realism, but these approaches still rely on static maps.
In contrast, our framework integrates urgency-aware prioritization, temporal NFZ handling, and LNS-style conflict repair to provide scalable, regulation-compliant preflight planning for urban UAV operations. A comparison of the algorithms is presented in Table~\ref{tab:feature_comparison}.

\begin{table*}[!th]
    \centering
    \caption{Comparison of MAPF methods based on key features relevant to urban air mobility: Heterogeneous Agents~(HA), Conflict Types, Roundtrip, Temporal NFZs, and Urban Environment~(Urban Env).}
    \label{tab:feature_comparison}
    \resizebox{\textwidth}{!}{%
        \begin{tabular}{lccccc}
            \toprule
            \textbf{Method} & \textbf{HA} & \textbf{Conflict Types} & \textbf{Roundtrip} & \textbf{Temporal NFZs} & \textbf{Urban Env} \\
            \midrule
            Batch CBS/ECBS & \checkmark & Geometric & Symmetric outbound/return & \(\times\) & Semi-urban (Japan study) \\
            PP/PBS & \(\times\) & Vertex/Edge & \(\times\) & \(\times\) & Grid map \\
            MAPF-LNS & \(\times\) & Vertex/Edge & \(\times\) & \(\times\) & Grid map \\
            \textbf{DTAPP-IICR (Ours)} & \textbf{\checkmark} & \textbf{Geometric} & \textbf{Asymmetric (realistic UTM)} & \textbf{\checkmark} & \textbf{Urban (dense airspace)} \\
            \bottomrule
        \end{tabular}%
    }
\end{table*}

\section{Problem Definition}
\label{sec:problem-formalization}
We consider a set of UAVs $\mathcal{U}=\{U_1,\dots,U_k\}$ executing delivery operations. The environment is discretized as an undirected $26$-connected cubic grid represented as a graph \(G=(V, E)\); vertices are unit cubic voxels \((V)\) and edges \((E)\) connect all 6 face-, 12 edge-, and 8 corner-adjacent voxels, allowing diagonal motion.

\begin{definition}[UAV model] \label{def:uav-model}
  Each UAV \(U_i\) is described by  
  \begin{itemize}
  \item a hub (start and return) vertex \(s_i \in V\) and a delivery vertex
        \(g_i \in V\);
  \item a departure time \(t^{\mathrm{init}}_i > 0\);
  \item a constant cruise speed \(v_{u_i} > 0\);
  \item a physical UAV radius \(r_{u_i} > 0\) with spherical UAV profile.
  \end{itemize}
\end{definition}

\begin{definition}[Roundtrip path] \label{def:roundtrip}
Each mission is a \emph{roundtrip} composed of an outbound path, an
on-site waiting period (e.g., package handover), and a return path.
A path for \(U_i\) is a sequence of space–time tuples:
\begin{equation}
p_i = \bigl[(v_{i,0}, t_{i,0}),\,\dots,\, (v_{i,\tau}, t_{i,\tau}),\,\dots,\,
(v_{i,\ell_i}, t_{i,\ell_i})\bigr],
\end{equation}
where \(v_{i,0}=v_{i,\ell_i}=s_i\); \(\ell_i\) is the path length and \(t_{i,\ell_i}\) is the arrival time of UAV \(U_i\). There exists a unique index \(\tau\) that marks the \emph{first} time the UAV reaches its delivery vertex, i.e., \(v_{i,\tau}=g_i\). For some \(w \ge 0\), the consecutive tuples \(v_{i,\tau},\dots,v_{i,\tau+w}\) are identical, representing a \(w\)-timestep waiting period at \(g_i\). The trajectory between consecutive distinct tuples is traversed at the constant UAV speed \(v_{u_i}\), forming the outbound and return segments.
\end{definition}
\begin{definition}[Temporal NFZs] \label{def:nfzs}
The airspace may contain a finite set \(\mathcal{F}\) of NFZs. Each NFZ is characterized by the pair  \((R_f,[t^{\mathrm{start}}_f,t^{\mathrm{end}}_f])\) with  \(R_f\subseteq V\) being a spatially connected subset of voxels representing the restricted region and \([t^{\mathrm{start}}_f,t^{\mathrm{end}}_f]\subset \mathbf{R}_{\ge 0}\) is its active time interval. Once activated at \(t^{\mathrm{start}}_f\), an NFZ remains in force until \(t^{\mathrm{end}}_f\), after which it is removed from the operational airspace. We explicitly \emph{exempt} a UAV’s hub and delivery vertices from NFZ restrictions, reflecting regulatory practice that allows authorized take-off and landing inside controlled areas.
\end{definition}
\paragraph{\text{Solution validity constraints:}}
\begin{itemize}
  \item \text{NFZ compliance.}
        Let \(pos_i(t)\) denote the position of UAV \(U_i\) at time \(t\). For any time \(t\),
        \begin{equation}
          pos_i(t)\notin
          F(t):=\!\!\bigcup_{f\in\mathcal{F}}
          \!\!\bigl\{\,v\in R_f \,\bigl|\, t\in[t^{\mathrm{start}}_f,t^{\mathrm{end}}_f]\bigr\}
        \end{equation}
        unless \(pos_i(t)\in\{s_i,g_i\}\).
  \item \text{Collision avoidance.}  
        In this paper, conflicts are classified as pursuit, head-on, or intersection~\cite{Extended-MAPF} as shown in Fig.~\ref{fig:conflict}.

        \begin{figure}[t]
            \centering
            \includegraphics[width=0.5\linewidth]{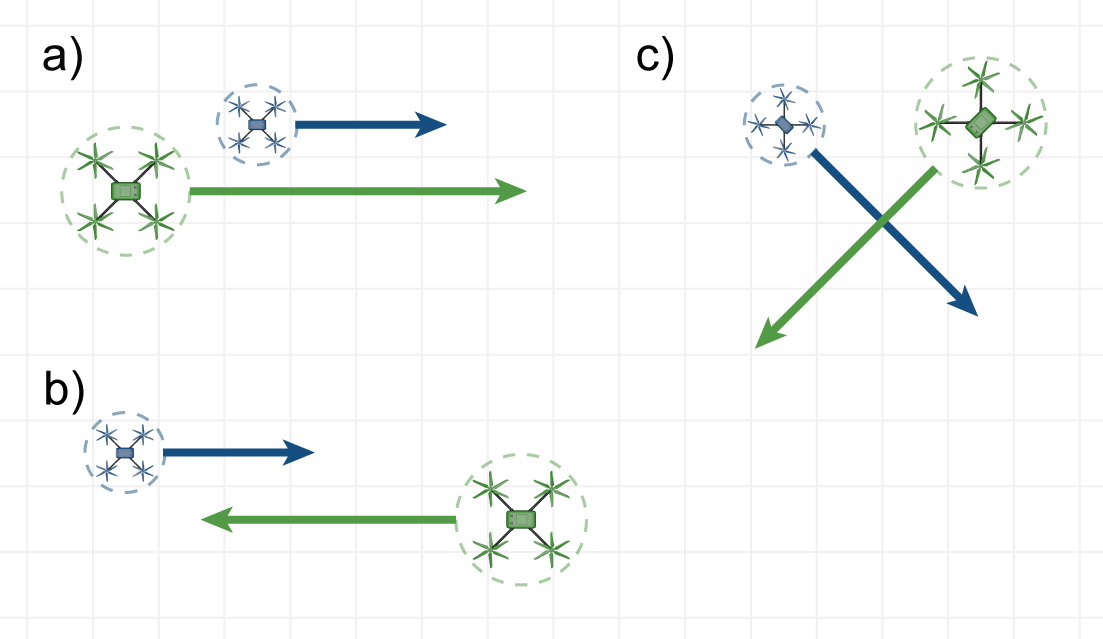}
            \caption{Types of conflicts between two agents of different sizes and speeds. a) Pursuit: agents move closely towards the same direction; b) Head-on: agents move into opposite directions; c) Intersection: agents cross paths~\cite{Extended-MAPF}.}
            \label{fig:conflict}
        \end{figure}

        Additionally, to ensure physical collision avoidance between UAVs, we
        introduce a safety buffer \(\gamma > 0\). Thus, for any two UAVs \(U_i\)
        and \(U_j\), the following must always hold:
        \begin{equation}
        \forall t,\quad \text{dist}\big(pos_i(t), pos_j(t)\big) > r_{u_i} + r_{u_j} + \gamma.
        \end{equation}

\end{itemize}

\paragraph{\text{Objective}}
Given the set $\mathcal{U}$ of UAVs, a \emph{solution} is a set of
collision-free, NFZ-respecting roundtrip paths
\(P=\{p_i \mid U_i\in\mathcal{U}\}\).
We minimize the flowtime (total flight time
at constant speed):
\begin{equation}
\min_{P}\;\; \sum_{U_i\in\mathcal{U}} T_i
\;=\;
\min_{P}\;\; \sum_{U_i\in\mathcal{U}} \bigl(t_{i,\ell_i} - t_{i,0}\bigr).
\end{equation}

\section{DTAPP Framework} 
\label{sec:dtapp}
We now introduce the \emph{DTAPP} (Delivery-Time Aware Prioritized Planning) framework for scalable, conflict-free multi-UAV preflight planning in dynamic airspace. \text{DTAPP} adopts a prioritized planning paradigm, where UAVs are ordered by delivery urgency (start time \(t^{\mathrm{init}}\)). To support this, we develop \emph{SFIPP-ST}, a 4D single-agent planner that computes NFZ-compliant, collision-free roundtrip trajectories while respecting heterogeneous UAV dynamics. The architecture of the system is described in Fig.~\ref{fig:archi}.

\begin{figure*}[t]
    \centering
    \includegraphics[width=\linewidth]{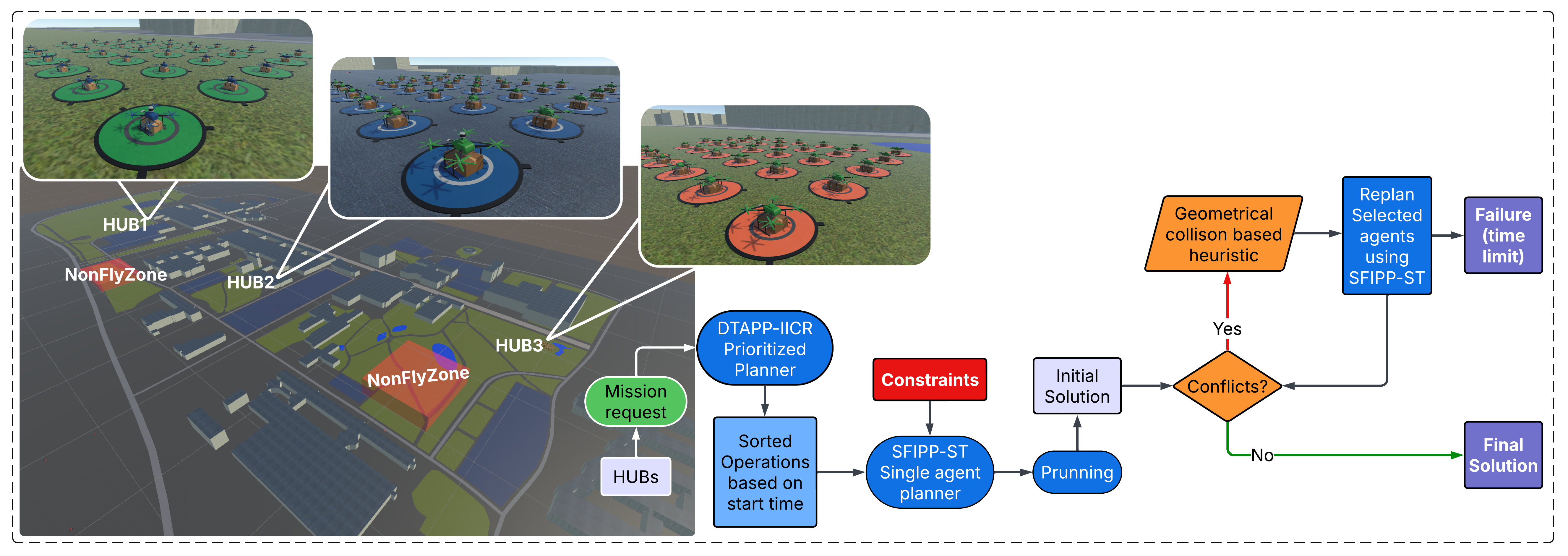}
    \caption{Overview of Preflight Planning in UTM. We have three hubs that send periodic operation requests to the central controller. We use \text{DTAPP-IICR} as a prioritized planner, where operations are sorted based on delivery urgency, $t^{\mathrm{init}}$. We use \text{SFIPP-ST} as the single-agent planner, which receives the UAV profile $(t^{\mathrm{init}}, v_{u_i}, r_{u_i})$, precomputed hard and regulatory constraints, and computes a 4D roundtrip trajectory from the hub to the delivery location and back. We add pruning during neighbor expansion to reduce the search. The initial solution might have soft conflicts, and we iteratively replan the conflicting agents until the solution is free of conflict or the time limit is reached.}
    \label{fig:archi}
\end{figure*}
\subsection{Single Agent Planning with Soft and Temporal Constraints}
\noindent
We extend the path planning framework from \text{PMDO} (Pathfinding with Mixed Dynamic Obstacles) as introduced in \cite{LNS2}, by integrating additional constraints such as \emph{temporal NFZs} and UAV profiling (\emph{different start times, speeds, and sizes}).

Formally, the planner must compute a roundtrip path \(p\) for a given UAV while avoiding three categories of restrictions: \textbf{hard obstacles} \(O_h\), which are static obstacles in the environment that must be strictly avoided; \textbf{temporal NFZs} \(\mathcal{F}\), which impose regulatory constraints that must be strictly respected, as formally defined by the NFZ compliance condition in Equation (2); and \textbf{soft constraints} \(O_s\), which represent other UAV trajectories. In this case, \text{SFIPP-ST} penalizes potential conflicts with \(O_s\) but does not strictly forbid them, allowing iterative conflict resolution to refine the solution.

\subsection{SFIPP-ST}
SFIPP-ST is a single-agent planner for 4D trajectory planning in UTM systems. It extends SIPPS~\cite{LNS2} by incorporating temporal NFZs, heterogeneous UAV profiles, and continuous geometric conflict modeling, making it suitable for UTM.
\begin{definition}[Safe Flight Interval (SFI)] \label{def:sfi}
A SFI for a vertex \( v \in V \) is defined as a contiguous time interval \( [t_{\text{start}}, t_{\text{end}}) \) during which a UAV can safely traverse or wait at \( v \) without: (1) Colliding with any hard obstacle \( O_h \). (2) Violating any active temporal NFZ \( \mathcal{F} \).
We define the indicator function for NFZ exclusion as:
\[
\IndicatorFunction_{\mathcal{F}}(v,t) = 
\begin{cases}
1 & \text{if } \exists (R_f, [t^{\mathrm{start}}_f, t^{\mathrm{end}}_f]) \in \mathcal{F},\, v \in R_f,\, t \in [t^{\mathrm{start}}_f, t^{\mathrm{end}}_f] \\
0 & \text{otherwise}.
\end{cases}
\]

The set of SFIs at vertex \( v \) is given by:
\begin{align}
T(v)
  &= \bigl\{ [t_{\text{start}}, t_{\text{end}}) \subseteq \mathbf{R}_{\ge 0}
     \,\big|\, 
     \forall t \in [t_{\text{start}}, t_{\text{end}}), \nonumber\\
  &\quad v \notin O_h \text{ and } \IndicatorFunction_{\mathcal{F}}(v,t) = 0
     \bigr\}.
\end{align}
That is, SFIs are all maximal contiguous time intervals where the vertex \( v \) is free from both hard obstacles and active NFZs.
\end{definition}
\begin{definition}[Search Node Definition] \label{def:snd}
SFIPP-ST represents each search node as a tuple:
\[
n = \big( v,\, I,\, i,\, g(n),\, h(n),\, c(n),\, r_{u_i},\, v_{u_i} \big),
\]
where: \( v \in V \): the current vertex location; \( I = [t_{\text{start}}, t_{\text{end}}) \): the associated SFI free from \( O_h \) and \( \mathcal{F} \); \( i \): the identifier of \( I \) in the precomputed SFI table \( T(v) \); \( g(n) \): the earliest arrival time at node \( n \); \( h(n) \): the heuristic cost-to-go; \( c(n) \): the cumulative conflict cost, representing the estimated number of geometric collisions with other UAV paths \( O_s \); \( r_{u_i} \) and \( v_{u_i} \) are respectively the radii and the speed of UAV \( U_i \).
The overall cost function for node \( n \) is defined as:\(f(n) = g(n) + h(n).\)
\end{definition}
\paragraph{\textbf{Main Algorithm:}}
Algorithm~\ref{alg:sfippst} begins by constructing the SFI table $T$ for all voxels $V$ in the 3D grid. Each $T(v)$ lists contiguous time intervals where $v$ is free from both hard obstacles ($O_h$) and active NFZs ($\mathcal{F}$). This preprocessing step reduces the 4D search problem into a set of time-constrained intervals, enabling efficient exploration (Line~1).
We then query the earliest safe interval $I_0$ at the start vertex $s$ valid at or after $t^{\mathrm{init}}_i$; if none exists, return \texttt{FAILURE} (Lines~2--4). Otherwise, we set the actual departure time to $t_s = \max(t^{\mathrm{init}}_i, I_0.\text{start})$, ensuring that the UAV departs no earlier than its mission schedule (Line~5). At this point, we create the root node $n_{\text{root}}$ at $s$ using the UAV profile (Line~6). The node is inserted into the OPEN list, with the CLOSED set initialized as empty (Line~7). Nodes in OPEN are prioritized by ascending soft collision cost c; ties are broken by the total cost $f$, ensuring that SFIPP-ST first minimizes conflicts and then travel time.  
At each iteration, the algorithm pops a node $n$ from $OPEN$ (Line~9). If $n$ corresponds to the goal vertex $g$, we reconstruct its trajectory using \textsc{BacktrackPath}$(n)$, which traces parent nodes back to the root (Lines~10--11). Otherwise, $n$ is expanded using a pruning strategy that keeps neighbors $v'$ aligned with the goal direction (Line~12). For each promising neighbor $v'$ and each interval $I'\in T[v']$ (Line~13), the travel time $\delta$ is computed and the earliest feasible arrival $t'$ (Line~14--15). If $t'\notin I'$, the interval is discarded (Line~16--17). Otherwise, we evaluate geometric conflicts along the continuous segment $[n.v\!\to\!v']$ over $[g(n), t']$ checking against soft obstacles $O_s$ using radius $r_{u_i}$ to obtain the cumulative conflict cost $c'$ (Line~18). We estimate $h'$ as the minimum travel time from $v'$ to $g$ at speed $v_{u_i}$ (Line~19). A new node $n'$ is created and inserted into $OPEN$ (Lines~20--21).
In addition to movement, the algorithm also considers \emph{waiting} at the current vertex $n.v$, since this action may be required when future safe intervals exist. If a subsequent safe interval $I_{\text{wait}}$ is available after $n.I$ (Line~22), the wait time is set to $t_{\text{wait}} = I_{\text{wait}}.\text{start}$ (Line~24), and then we compute geometric conflicts for $[n.v\!\to\!n.v]$ over $[g(n), t_{\text{wait}}]$ to get $c_{\text{wait}}$ (Line~25). We estimate $h_{\text{wait}}$, create a wait node $n_{\text{wait}}$, and insert it into $OPEN$ (Lines~26--27).
The algorithm terminates with a conflict-minimizing path or \texttt{FAILURE} if $OPEN$ is exhausted (Line~28).

\begin{algorithm}[h]
\caption{\textsc{SFIPP-ST}}
\label{alg:sfippst}

\KwIn{Graph $G=(V,E)$, start $s$, goal $g$, UAV profile $(t^{\mathrm{init}}_i, v_{u_i}, r_{u_i})$, hard obstacles $O_h$, temporal NFZs $\mathcal{F}$, soft obstacles $O_s$}
\KwOut{Path $p$ or \texttt{FAILURE}}

$T \gets \BuildSFITable(V, O_h, \mathcal{F}, r_{u_i}, v_{u_i})$\;
$I_0 \gets \FirstSafeInterval(T[s], t^{\mathrm{init}}_i)$\;
\If{$I_0 = \emptyset$}{\Return \texttt{FAILURE}\;}
$t_s \gets \max(t^{\mathrm{init}}_i, I_0.\text{start})$\;
$n_{\text{root}} \gets \CreateStartNode(s, I_0, i, t_s, r_{u_i}, v_{u_i})$\;

$OPEN \gets \{ n_{\text{root}} \}$; $CLOSED \gets \emptyset$\; \tcp{Order by lowest $c(n)$; tie-break by $f(n)$}

\While{$OPEN \neq \emptyset$}{
  $n \gets \operatorname{OPEN.pop}()$\;
  \If{$n.v = g$}{\Return \BacktrackPath($n$)\;}
  
  \ForEach{$v' \in \PrunedNeighbors(n.v, g)$}{
    \ForEach{$I' \in T[v']$}{
      $\delta \gets \TravelTime(n.v, v', v_{u_i})$\;
      $t' \gets \max(g(n) + \delta, I'.\text{start})$\;
      \If{$t' \notin I'$}{\textbf{continue}\;}
      
      $c' \gets c(n) + \GeoConflict\big([n.v\!\to\!v'],\, [g(n), t'],\, r_{u_i},\, O_s\big)$\;
      $h' \gets \Heuristic(v', g, v_{u_i})$\;
      
      $n' \gets (v',\, g=t',\, h=h',\, c=c',\, I')$\;
      $OPEN \gets OPEN \cup \{ n' \}$\;
    }
  }
  
  \If{\NextWaitIntervalExists($n.v, n.I, T$)}{
    $I_{\text{wait}} \gets \NextWaitInterval(n.v, n.I, T)$\;
    $t_{\text{wait}} \gets I_{\text{wait}}.\text{start}$\;
    
    $c_{\text{wait}} \gets c(n) + \GeoConflict\big([n.v\!\to\!n.v],\, [g(n), t_{\text{wait}}],\, r_{u_i},\, O_s\big)$\;
    $h_{\text{wait}} \gets \Heuristic(n.v, g, v_{u_i})$\;
    
    $n_{\text{wait}} \gets (n.v,\, g=t_{\text{wait}},\, h=h_{\text{wait}},\, c=c_{\text{wait}},\, I_{\text{wait}})$\;
    $OPEN \gets OPEN \cup \{ n_{\text{wait}} \}$\;
  }
}
\Return \texttt{FAILURE}\;
\end{algorithm}

\paragraph{\textbf{Directional Pruning for Neighbor Expansion:}}
In a 3D grid with 26 connected voxels, a brute-force expansion of all neighbors at each step is computationally expensive. To accelerate the search, we introduce a directional pruning heuristic that reduces the branching factor. The core idea is to prioritize expanding neighbors that represent clear, monotonic progress toward the goal. This is achieved while rigorously preserving the search completeness.

The heuristic works by comparing the normalized direction vector from the current location to a neighbor with the direction vector to the final goal. This vector, which we call the \emph{direction sector}, is computed as:
\[
S = (\text{sign}(\Delta x),\, \text{sign}(\Delta y),\, \text{sign}(\Delta z)) \in \{-1, 0, 1\}^3
\]
where $(\Delta x, \Delta y, \Delta z)$ is the displacement vector. For example, \(S = (1, 0, -1)\) implies a move along positive \(x\), no change in \(y\), and negative \(z\).

A neighbor $n$ is considered promising if its direction sector, $S_{\text{nb}}$, aligns with the goal's direction sector, $S_{\text{goal}}$, under conditions that favor direct paths:

\begin{enumerate}
    \item \text{XY-Plane Alignment}: The neighbor lies in the same general direction as the goal on the 2D plane. This includes:
    \begin{itemize}
        \item \text{Diagonal moves} that match the goal's quadrant, for instance, if the goal is north-east, ($S_{\text{goal}}.x=1, S_{\text{goal}}.y=1$), any neighbor in that quadrant is promising.
        \item \text{Cardinal moves} along an axis if the goal also lies primarily on that axis (e.g., moving straight north if the goal is directly north).
    \end{itemize}
    \item \text{Z-Axis Alignment}: The neighbor involves a direct vertical movement toward the goal's altitude ($S_{\text{nb}}.z = S_{\text{goal}}.z \neq 0$). This rule prioritizes changing flight levels, a common requirement in UAV navigation.
\end{enumerate}
To guarantee completeness, our approach includes two important exceptions: any adjacent neighbor that is the goal is always included, and the "wait" action is always an option. If these pruning rules result in no valid moves, the algorithm reverts to expanding the full 26-neighbor set, preserving global completeness. The full logic is detailed in Algorithm~\ref{alg:direction-prune}.

\begin{algorithm}[t]
\caption{\textsc{PrunedNeighbors}$(V.loc, g)$}
\label{alg:direction-prune}
\KwIn{Set of neighbors $V$, current location $loc$, goal location $g$}
\KwOut{Pruned neighbor set $\mathcal{N}(loc)$}

$S_{\text{goal}} \gets$ \DirectionSector{$loc, g$}\;
$\mathcal{N}(loc) \gets \{loc\}$\; \tcp{Always include wait action}

\ForEach{$n \in V$}{
    \uIf{$n = g$}{
        $\mathcal{N}(loc) \gets \mathcal{N}(loc) \cup \{n\}$\;
    }
    \Else{
        $S_{nb} \gets$ \DirectionSector{$loc, n$}\;
        
        \tcp{Check alignment conditions}
        \If{($S_{nb}.x = S_{\text{goal}}.x \land S_{nb}.y = S_{\text{goal}}.y$) \textbf{or}\\
            \hspace{1cm}($S_{nb}.x = S_{\text{goal}}.x \land S_{nb}.x \neq 0 \land S_{nb}.y = 0$) \textbf{or}\\
            \hspace{1cm}($S_{nb}.y = S_{\text{goal}}.y \land S_{nb}.y \neq 0 \land S_{nb}.x = 0$) \textbf{or}\\
            \hspace{1cm}($S_{nb}.z = S_{\text{goal}}.z \land S_{nb}.z \neq 0$)}{
            $\mathcal{N}(loc) \gets \mathcal{N}(loc) \cup \{n\}$\;
        }
    }
}
\If{$|\mathcal{N}(loc)| = 1$}{
    $\mathcal{N}(loc) \gets V \cup \{loc\}$\; \tcp{Fallback to all neighbors}
}
\Return $\mathcal{N}(loc)$\;
\end{algorithm}
\subsection{Theoretical Properties of SFIPP-ST}
We now establish two theoretical properties of \text{SFIPP-ST} planner: \emph{completeness} and \emph{optimality}. 
\begin{theorem}[Completeness]
SFIPP-ST is complete; it returns a valid path if one exists and \texttt{FAILURE} otherwise.
\end{theorem}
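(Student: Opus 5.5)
The argument reduces completeness of SFIPP-ST to the standard completeness argument for SIPP/SIPPS on a finite interval graph. Let $\mathcal{S}=\{(v,i) : v\in V,\ i \text{ indexes an interval in } T(v)\}$ be the set of \emph{interval states}. $V$ is finite and each NFZ contributes at most two breakpoints to $T(v)$, so $T(v)$ is a finite union of maximal intervals and $\mathcal{S}$ is finite. Soft obstacles $O_s$ only change the priority key $c(n)$ and never prune a successor. So $O_s$ affects which path is returned, not whether one is found. A ``valid path'' therefore means one that avoids $O_h$ and respects $\mathcal{F}$.

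\textbf{Step 1 (soundness).} Every returned path is valid. Each node carries an interval $I'\in T[v']$, and a node is created only if its arrival time $t'$ lies in $I'$ (Lines~15--17). Wait nodes are placed at the start of a later SFI. Backtracking therefore yields a trajectory in which every vertex is occupied only during an SFI. For movement along an edge, I would argue that the segment occupies only the endpoints' SFIs. This uses the discretization convention that the UAV is at $n.v$ until departure and at $v'$ on arrival, with the hub and goal exempt from NFZs. If no node ever reaches $g$, the algorithm returns \texttt{FAILURE}, so the output is always either a valid path or \texttt{FAILURE}.

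\textbf{Step 2 (reachability of valid paths).} Fix a valid path $p^*$ to $g$. By induction along $p^*$, for each prefix ending at vertex $v_k$ in interval $I_k$ at time $t_k$, the search eventually generates a node $(v_k, I_k)$ with $g(n)\le t_k$. This is the usual SIPP dominance argument. Arriving earlier within the same safe interval is never worse, because one can wait inside the interval, and the next move from the earlier node reaches $I_{k+1}$ no later than $p^*$ does. Waits that cross interval boundaries in $p^*$ are simulated by the \NextWaitInterval{} action.

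\textbf{Step 3 (pruning, the main obstacle).} The induction needs the next vertex of $p^*$ to be offered by \PrunedNeighbors, and directional pruning may drop it. I would handle this in one of two ways. The first is to show that whenever the next vertex of $p^*$ is pruned, some alternative path with all moves goal-aligned also exists; this is false in general, for example when an obstacle forces a detour away from the goal. The second, which is the one I would commit to, is to interpret completeness relative to the fallback. Algorithm~\ref{alg:direction-prune} reverts to all 26 neighbors when no aligned move exists, and the wait action is always kept. I would then argue that the union of successors over generated nodes and their wait successors still covers $p^*$. If this does not go through, I would strengthen the fallback to trigger whenever none of the aligned neighbors has a reachable SFI, and prove the theorem for that variant. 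This step is where the proof is most delicate, and I would state the precise fallback condition used.

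\textbf{Step 4 (termination).} To rule out infinite expansion from waits and re-insertions, I would use CLOSED keyed by $(v,i)$ with the earliest $g$ retained. This leaves finitely many expansions in total, since $\mathcal{S}$ is finite and each state is expanded at most once per distinct conflict cost value, which is bounded by $|O_s|$. Thus if OPEN empties without reaching $g$, Step 2 shows that no valid path exists, so \texttt{FAILURE} is correct.
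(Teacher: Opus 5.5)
Your Step~3 is a genuine gap, and for Algorithm~\ref{alg:direction-prune} as written it cannot be closed: the claim that the successors of generated nodes still cover $p^*$ is false. The paper's proof handles this step with the fallback alone (``no feasible successor is ever permanently excluded''), which is exactly the claim you could not justify. It fails for a simple reason: the fallback is local, firing only when no aligned neighbor survives, while dead ends are global.

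Concretely, let a single flight level be free, with all voxels above and below it in $O_h$, no NFZs and no soft obstacles. Take $s=(0,0)$, $g=(4,0)$, and a wall of hard obstacles at $x=2$, $-2\le y\le 2$. A valid path around the wall through $y=3$ exists. Since $S_{\text{goal}}.z=0$ throughout, the $z$-rule never fires.
\begin{itemize}
\item From $(0,0)$ the only aligned free neighbor is $(1,0)$.
\item At $(1,0)$ every aligned voxel is wall, so the fallback fires, assuming $V$ lists only free neighbors.
\item At $(1,1)$ we have $S_{\text{goal}}=(1,-1,0)$, and the aligned neighbors are $(2,0)$, $(2,1)$ and $(1,0)$. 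Since $(1,0)$ is free with reachable SFI $[0,\infty)$, neither the paper's fallback nor your strengthened one fires, and the only move is back to $(1,0)$.
\item Likewise, $(0,\pm 1)$ and $(1,-1)$ lead only back into $\{0,1\}\times\{-1,0,1\}$.
\end{itemize}
The search is trapped in these six cells and never returns the existing path. Without duplicate detection it does not even return \texttt{FAILURE}. If instead $V$ also lists obstacle voxels, the fallback never fires at all, because an in-grid neighbor with $(S_{\text{nb}}.x,S_{\text{nb}}.y)=(S_{\text{goal}}.x,S_{\text{goal}}.y)$ always exists.

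So the theorem is false for SFIPP-ST with this pruning, and a proof must change the algorithm: the fallback has to be triggered by search failure, not by local emptiness. For example, keep each expanded node's pruned successors in a deferred list, push them into OPEN whenever OPEN empties, and return \texttt{FAILURE} only when both are empty (or simply rerun without pruning). Then every expanded node eventually has its full successor set generated, and your Step~2 induction goes through.

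Two smaller points. First, Step~1 does not establish soundness as stated. SFIs are maximal, so consecutive SFIs at a vertex are separated by an active-NFZ period. A \textsc{NextWaitInterval} node therefore keeps the UAV at $n.v$ inside an active NFZ, which is legal only at the exempt vertices $s,g$. In addition, the move rule (Lines~15--17) checks only $t'\in I'$, never that the implicit wait at $n.v$ until departure at $t'-\delta$ stays inside $n.I$. ``Every vertex is occupied only during an SFI'' requires adding both restrictions. A valid $p^*$ never waits across such a gap at a non-exempt vertex anyway, so Step~2 loses nothing.

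Second, in Step~4 duplicate detection is a necessary repair. Algorithm~\ref{alg:sfippst} initializes CLOSED but never uses it, which also undercuts the paper's ``finite state space, hence no infinite exploration.'' However, your bound is wrong: the cumulative $c$ can exceed $|O_s|$, since the same soft path can be hit on several segments. Termination follows instead because $g$ never decreases along a successor. Under a strict earliest-$g$ rule, no expanded node then has an ancestor in the same $(v,i)$, so expansions correspond to $(v,i)$-simple paths in a finite graph.
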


\emph{Proof.} The proof follows the completeness guarantees of SIPP~\cite{SIPP} and SIPPS~\cite{LNS2}. The modifications in SFIPP-ST preserve completeness:  

1. \text{Finite state space.} Each search node is a tuple $n = (v, I, \ldots)$, where $v \in V$ and $I \in T(v)$. Since $V$ is finite and $T(v)$ is precomputed as a finite set of Safe Flight Intervals, the search space is finite. Thus, infinite exploration is impossible.  

2. \text{Temporal NFZs as hard constraints.} NFZs are incorporated in preprocessing by BuildSFITable, which computes all $[t_a, t_b)$ where $v$ is free from $O_h$ and $\mathcal{F}$. NFZs can therefore be treated as time-dependent hard obstacles, which preserves the validity of the safe-interval graph explored by the search.  

3. \text{Completeness-preserving pruning.} The directional pruning strategy reduces branching, but includes a fallback: if pruning eliminates all neighbors, the algorithm reverts to expanding the full 26-connected set. Hence no feasible successor is ever permanently excluded.  

Since SFIPP-ST explores a finite state space, enforces NFZs as hard constraints, and uses pruning that preserves reachability, it is complete. 

\begin{theorem}[Optimality]
If a conflict-free path exists, SFIPP-ST returns the shortest such path with zero soft collisions.
\end{theorem}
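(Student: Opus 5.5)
The plan is to treat SFIPP-ST as a lexicographic best-first search on the safe-interval graph, with key $(c(n), f(n))$, and to argue in three steps: zero-conflict nodes are exhausted first, the $c=0$ layer behaves like SIPP-style A$^*$, and the directional pruning does not remove every optimal conflict-free path. Throughout I will use the structure already invoked in the proof of the completeness theorem. The search space of pairs $(v, I)$ with $I \in T(v)$ is finite, and NFZs and $O_h$ are encoded as hard constraints in the SFI table. Hence every popped node corresponds to a prefix of a path that is valid with respect to $O_h$ and $\mathcal{F}$.

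\textbf{Step 1 (conflict layer).} Since $\GeoConflict$ returns nonnegative values and $c' = c(n) + \GeoConflict(\cdot) \ge c(n)$, the key $c$ is monotone along every search branch. Suppose a conflict-free path $p^\ast$ exists. Then every prefix of $p^\ast$ has $c = 0$. By induction on the prefix length, some node with $c = 0$ lying on (a dominating version of) $p^\ast$ stays in OPEN until a goal node with $c = 0$ is popped. Because OPEN is ordered by $c$ first, no node with $c > 0$ is ever popped before that goal node. Hence the returned path has zero soft collisions.

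\textbf{Step 2 (temporal optimality inside $c=0$).} Restricted to $c = 0$ nodes, the ordering is by $f = g + h$, where $h$ is the minimum travel time to $g$ at speed $v_{u_i}$. This $h$ is a lower bound on remaining flight time and is consistent: $h(v) \le \delta(v, v') + h(v')$ by the triangle inequality for Euclidean distance. Each node stores the earliest arrival time $t' = \max(g(n) + \delta, I'.\text{start})$ inside its interval. The SIPP dominance argument then shows that the earliest arrival within an interval dominates any later arrival in that same interval: any continuation available from the later arrival is also available from the earlier one, since waiting is always permitted and included. Standard A$^*$ reasoning with a consistent heuristic then gives that the first $c = 0$ goal node popped has minimal arrival time among all conflict-free paths that the expansion rule can generate.

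\textbf{Step 3 (pruning, the main obstacle).} It remains to show that $\PrunedNeighbors$ does not cut off every minimum-time conflict-free path. I will argue by an exchange argument on an optimal conflict-free path $p^\ast$. Take the first vertex $v_k$ on $p^\ast$ whose successor $v_{k+1}$ is not in the pruned set of $v_k$. Two cases arise.
\begin{itemize}
\item \emph{The pruned set is a singleton.} Then the fallback re-inserts all 26 neighbors, so $v_{k+1}$ is generated after all.
\item \emph{The pruned set is larger.} Then the move $v_k \to v_{k+1}$ does not decrease the sign-distance toward the goal in any axis that the rules privilege. I would try to show that such a move can be replaced by a goal-aligned move, or by a wait followed by an aligned move, reaching the same or a later point of $p^\ast$ no later and without new conflicts.
\end{itemize}
Repeating this replacement yields an optimal conflict-free path all of whose moves are generated by the search, so Steps 1--2 apply to it.

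This replacement is exactly where I expect difficulty: a non-aligned detour around a temporarily active NFZ or obstacle may be genuinely necessary and not locally replaceable. If the exchange fails, I would fall back on the reading implicit in the completeness theorem's item 3, that no feasible successor is permanently excluded. Concretely, I would show that every vertex of $p^\ast$ is eventually reached by some $c = 0$ node, with arrival no later than on $p^\ast$, via either the aligned expansions or the fallback. I would then conclude optimality with respect to the search graph realized by SFIPP-ST.
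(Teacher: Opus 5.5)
Your Steps 1--2 are a more careful version of the paper's entire proof, which consists only of the $(c,f)$ ordering and the admissibility of the straight-line travel-time heuristic. The paper never addresses pruning in that proof, so you are right that Step 3 is where an argument is needed. But Step 3 cannot be completed, and your fallback claim is false. \textsc{PrunedNeighbors} reverts to all 26 neighbours only when the pruned set is exactly $\{loc\}$. So every non-aligned move is discarded as soon as one aligned move survives, even a move leading straight back.

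Here is a counterexample. Take a grid one voxel high (so the $z$-rule never fires), no NFZs, $O_s=\emptyset$, start $(0,0,0)$, goal $(10,0,0)$, and obstacles only at $(1,-1,0),(1,0,0),(1,1,0)$.
At the start no aligned neighbour is free, so the fallback fires once and generates the other five cells of the block $\{-1,0\}\times\{-1,0,1\}$. From each of those cells some aligned move is free, so the fallback never fires again. Every aligned move has $\Delta x\ge 0$ and does not increase $|y|$. Hence the search never leaves this $2\times 3$ block. It cycles there, since \texttt{CLOSED} is never used, or returns \texttt{FAILURE} with duplicate detection. If blocked voxels count as neighbours, the fallback never fires and the root has no successor at all.

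A path around the three-voxel wall clearly exists, so ``every vertex of $p^\ast$ is eventually reached'' is false. The same example refutes item 3 of the completeness proof that you lean on. ``Optimality with respect to the search graph realized by SFIPP-ST'' is strictly weaker than the theorem.

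Step 2 also fails as written. The SIPP dominance of the earliest arrival in an interval requires two things: the intervals must encode all constraints, and waiting inside an interval must be available. Neither holds in SFIPP-ST.
\begin{itemize}
\item The SFIs are built from $O_h$ and $\mathcal{F}$ only. SIPPS additionally splits its intervals at soft obstacles precisely so that this dominance survives.
\item Successors get $t'=\max(g(n)+\delta, I'.\text{start})$, and the wait action only jumps to the start of the \emph{next} SFI.
\end{itemize}
So an earlier arrival can be forced into a soft conflict that a later arrival avoids, and ``waiting is always permitted and included'' is false.

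For example, with no NFZs every free voxel has the single SFI $[0,\infty)$, so no delay is ever generated. Suppose the hub's only free neighbour $A$ lies toward the goal and is occupied by another planned UAV over a window containing $t_s+\delta(s,A)$. Then every generated path has $c>0$. Yet, with small radii and buffer, a suitably timed wait at the hub gives a conflict-free path.

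Thus the statement holds only for a modified planner: no pruning (or a pruning rule with a genuine reachability proof), plus intervals that account for $O_s$ and allow in-interval waiting. A correct write-up must either state these as explicit hypotheses, or prove only the relative statement and say so.
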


\emph{Proof.} The proof extends the optimality argument of SIPPS:  

1. \text{Ordering.} Nodes in OPEN are ordered first by conflict cost $c(n)$, then by $f(n) = g(n) + h(n)$. Thus, all zero-conflict ($c=0$) nodes are expanded before any $c>0$ nodes. This guarantees that if a conflict-free path exists, it is found before any path with conflicts.  

2. \text{Admissible heuristic.} The heuristic $h(n)$ is defined as the straight-line travel time from $v$ to the goal at speed $v_u$. Since the Euclidean distance never overestimates the true travel distance in a grid, $h(n)$ is admissible and ensures A*’s optimality among equal-$c$ paths.

Therefore, SFIPP-ST always returns the shortest available path with zero soft collisions, if one exists. 

\subsection{DTAPP-IICR}
\label{subsec:dtapp-iicr}
We now introduce \emph{DTAPP-IICR}, which combines sequential roundtrip planning with iterative conflict resolution. 
\text{DTAPP-IICR}, presented in Algorithm~\ref{alg:dtapp-iicr}, begins by sorting UAVs based on $t^{\text{init}}_i$, ensuring urgent deliveries are planned first. Ties among UAVs with equal start times are broken randomly to avoid bias and deadlocks (Lines 1--3). Each UAV $U_i$ computes its roundtrip path using \textsc{PlanRoundTrip}, treating all previously planned paths $P$ as dynamic obstacles (Line 6). The resulting paths are stored in the solution set $P$ (Line 7), and any conflicts between UAV trajectories are recorded in the collision graph (Line 8).
The iterative conflict resolution phase (Lines 9-19) systematically eliminates conflicts until a globally feasible solution is achieved or the time limit is exceeded. At each iteration, the algorithm first identifies the set $S_c$ of collided UAVs in the solution set $P$ (Line 10). Then we remove all paths $P_{old} \subset P$ belonging to the selected UAVs in $S$ (Lines 11--12).
Each UAV $U_i$ in this prioritized subset is then replanned sequentially using the \textsc{PlanRoundTrip} function (Line 14), treating all remaining paths in $P$ (from non-selected UAVs) as fixed spatio-temporal obstacles. This ensures that newly computed paths avoid conflicts with UAVs not currently being replanned. 
If a UAV successfully finds a valid conflict-free path, it is immediately added back to the solution set $P$, making it a constraint for subsequent UAVs in the current iteration (Lines 15--16). However, if any UAV fails to find a feasible path during replanning, the algorithm performs a rollback operation: all previously backed-up paths $P_{\text{old}}$ are restored to $P$, and the current iteration terminates (Line 18). We update the number of collision counts (Line 19).
If there are no more collisions, we return the final path $P$, or we return failure otherwise (Lines 20--23).

\begin{algorithm}[t]
\caption{DTAPP-IICR}
\label{alg:dtapp-iicr}
\KwIn{UAV set $U=\{U_1,\dots,U_n\}$, 3D grid $\mathcal{G}$, temporal NFZs $F(t)$, time limit $T_{\max}$}
\KwOut{Conflict-free paths $P$ or FAILURE}

$U' \gets \mathrm{sortAsc}\bigl(U,\, t^{\text{init}}\bigr)$\;
\ForEach(\tcp*[f]{break tie}){group $G \subseteq U'$ with equal $t^{\text{init}}$}{
      shuffle $G$ within $U'$\;
}
$P \gets \emptyset$\;
\ForEach{$U_i \in U'$}{
    $p_i \gets \textsc{PlanRoundTrip}(U_i, \mathcal{G}, F(\cdot), \emptyset)$\;
    $P \gets P \cup \{p_i\}$\;
}
$\text{num\_conflicts} \gets \textsc{CountConflicts}(P)$\;

\While{$\text{num\_conflicts} > 0$ \textbf{and} $t < T_{\max}$}{
    $S_c \gets \textsc{ConflictHeuristic}(P, \text{num\_conflicts})$\;
    $P_{\text{old}} \gets \{p_i : U_i \in S\}$\;
    $P \gets P \setminus P_{\text{old}}$\;
    \ForEach{$U_i \in S_c$}{
        $p_i \gets \textsc{PlanRoundTrip}(U_i, \mathcal{G}, F(\cdot), P)$\;
        \uIf{$p_i$ valid}{$P \gets P \cup \{p_i\}$}
        \Else{$P \gets P \cup P_{\text{old}}$; \textbf{break}}
    }
    $\text{num\_conflicts} \gets \textsc{CountConflicts}(P)$\;
}
\uIf{$\text{num\_conflicts} = 0$}{\Return $P$}
\Else{\Return \textbf{FAILURE}}

\BlankLine
\SetKwProg{Fn}{Function}{:}{}
\Fn{\textsc{PlanRoundTrip}$(U_i, \mathcal{G}, F(\cdot), P)$}{
    $p_{\text{out}} \gets \textsc{SFIPP-ST}(s_i \to g_i, \mathcal{G}, F(\cdot), P)$\;
    $p_{\text{wait}} \gets \textsc{GenerateWait}(g_i, w_i)$\;
    $p_{\text{ret}} \gets \textsc{SFIPP-ST}(g_i \to s_i, \mathcal{G}, F(\cdot), P \cup \{p_{\text{out}}, p_{\text{wait}}\})$\;
    \Return $\textsc{Combine}(p_{\text{out}}, p_{\text{wait}}, p_{\text{ret}})$\;
}
\end{algorithm}
\subsection{Geometric Conflict-Based Heuristic}
\label{subsec:geom-neigh}
We adapt the neighborhood selection strategy from LNS2~\cite{LNS2} to the continuous 4D domain. Instead of discrete graph collisions, we maintain a \emph{geometric collision graph} $G_c=(V_c, E_c)$, where each vertex denotes a UAV and an edge $(U_i, U_j)\in E_c$ exists if the pair violates the minimum separation $\text{dist}(pos_i(t), pos_j(t)) \le r_i + r_j + \gamma$ at any time $t$.

Given a target neighborhood size $N$, we construct a replanning set $\mathcal{A}_s$ as follows. First, sample a vertex $v\in V_c$ with $\deg(v)>0$ and extract the connected component $G_c^0=(V_c^0,E_c^0)$ containing $v$. Then, following the logic of~\cite{LNS2}, we apply one of two expansion cases:

  \paragraph{Case 1 (Small conflict region, $|V_c^0|\le N$):}
  Initialize $\mathcal{A}_s\leftarrow V_c^0$ and expand it via spatial random walks to include nearby agents that may restrict repair. Until $|\mathcal{A}_s|=N$ or no new agents can be found:
  1) Sample $U_a\in \mathcal{A}_s$ uniformly and a random time index $t$ along its current path; 2) From the corresponding 3D location, perform a random walk over the 26-connected neighborhood (including wait); and 3) If the walk reaches a location where the required safety buffer overlaps with another UAV $U_b$, add $U_b$ to $\mathcal{A}_s$.

  \paragraph{Case 2 (Large conflict region, $|V_c^0|>N$):}
  Perform a random walk on the \emph{collision graph} $G_c^0$ itself, collecting distinct vertices until $|\mathcal{A}_s|=N$. This biases selection toward densely connected subgraphs, prioritizing UAVs engaged in tightly coupled conflicts.

\section{Experimental Evaluation}
\label{sec:experiment}
To evaluate the effectiveness, efficiency, and robustness of our proposed methods, we conduct experiments in two settings: a Monte Carlo Simulation (MCS) environment and a realistic city-scale map.  

In the MCS setup, the environment is a $100 \times 100 \times 10$ 3D grid with 1 m$^3$ voxels providing high-precision collision detection and path planning. The lower four levels (0–3) contain static obstacles (e.g., buildings, trees), while the upper six levels (4–9) provide open airspace, reflecting typical urban layering. UAV profiles are randomly sampled: start times in [1 s, 1000 s], radii in [0.5 m, 2.0 m], and speeds in [1.0, 5.0 m/s]. NFZs are activated over intervals randomly drawn from [100s, 500s], capturing the time-varying nature of restricted airspace. Across trials, we vary the number of UAV operations, obstacle densities, and NFZs to test scalability and robustness. The duration of package handover is fixed at $w=10$ and the neighborhood size used by \text{DTAPP-IICR} for iterative conflict resolution is fixed at $N=10$.

For external validation, we also evaluate our methods on a realistic city-scale map~(Fig.~\ref{fig:archi}) with designated UAV hubs, delivery destinations (e.g., homes, hospitals), and regulatory constraints. This case study demonstrates the applicability of our approach to practical urban air mobility scenarios.

As evaluation metrics, we measure: (1) \emph{Success rate}: percentage of instances solved within the time limit, and (2) \emph{Average runtime}: mean planning time, using the time limit for unsolved instances with their standard deviations. All experiments were run on an Intel Core i9-10900X CPU with 128GB RAM.
\paragraph{\textbf{Experiment 1: SFIPP-ST and Pruning Evaluation}}
In this experiment, we evaluate the effectiveness of the \text{SFIPP-ST} algorithm and the proposed pruning optimization. We compare Priority Planning (PP) with a modified version of \text{SFIPP-ST} (where soft obstacles are treated as hard obstacles) against \text{DTAPP-IICR}. Both solvers are tested with and without pruning across scenarios with 10, 50, and 100 agents in environments with 5\% obstacle density. For each configuration, we generate 20 instances and set a fixed time limit of 300s (5 minutes). Results are summarized in Table~\ref{tab:pruning1}.
\text{DTAPP-IICR} demonstrates superior scalability, maintaining a 100\% success rate across all agent counts, while PP without pruning fails completely at 100 agents. \text{DTAPP-IICR} achieves up to 15 times faster runtimes than PP, highlighting the fundamental advantage of soft collision handling over \text{SFIPP-ST}'s hard obstacle treatment. Our direction-based pruning provides significant speedup for both algorithms, with \text{DTAPP-IICR} achieving up to 34-50\% faster performance and PP with pruning remains viable at scale.
\begin{table}[t]
    \caption{Comparison of solver performance across varying agent counts (N). PP and DTAPP-IICR denote versions with pruning. (W/O) indicates no pruning.}
    \label{tab:pruning1}
    \centering
    \begin{tabular}{lccc}
        \toprule
        \textit{Solver} & \textit{N} & \textit{Success Rate (\%)} & \textit{Runtime (s)} \\
        \midrule
        PP (W/O)         & \multirow{4}{*}{10}  & \multirow{4}{*}{\textbf{100}} & 6.05 $\pm$ 2.04   \\
        PP               &   &  & 3.06 $\pm$ 1.09  \\
        DTAPP-IICR (W/O) &   &  & 2.31 $\pm$ 0.82  \\
        DTAPP-IICR       &   &  & \textbf{1.31 $\pm$ 0.57}   \\
        \midrule
        PP (W/O)         & \multirow{4}{*}{50}  & \multirow{4}{*}{\textbf{100}} & 115.81 $\pm$ 11.64 \\
        PP               &   &  & 54.53 $\pm$ 5.90  \\
        DTAPP-IICR (W/O) &   &  & 10.97 $\pm$ 1.32  \\
        DTAPP-IICR       &   &  & \textbf{7.23 $\pm$ 1.06}  \\
        \midrule
        PP (W/O)         & \multirow{4}{*}{100} & 0   & 300.0 $\pm$ 0.00 \\
        PP               &  & \textbf{100} & 223.22 $\pm$ 16.14 \\
        DTAPP-IICR (W/O) &  & \textbf{100} & 25.91 $\pm$ 2.15  \\
        DTAPP-IICR       &  & \textbf{100} & \textbf{13.09 $\pm$ 1.26} \\
        \bottomrule
    \end{tabular}
\end{table}

\paragraph{\textbf{Experiment 2: Scalability Evaluation}}
This experiment evaluates the performance of \text{DTAPP-IICR} as the number of operations increases.  We consider our previous environment with  5\% and 10\% obstacles, where the number of UAVs increases up to 1000. We implemented batch CBS and ECBS from \cite{Extended-MAPF} since the original code is unavailable and compared them against our \text{DTAPP-IICR} and PP. To get a fair comparison, we consider only the outbound path without temporal NFZs. The return path will then be symmetric to the outbound. The time limit is fixed at 480s (8min). Figure~\ref{fig:scalability} presents the results. While \text{DTAPP-IICR} maintains a 100\% success rate even with 1000 UAVs, batch CBS fails to scale beyond small instances due to its exponential complexity. Notably, PP also suffers from poor scalability, failing beyond 250 agents. This overly conservative treatment results in over-constrained planning, making feasible solutions unlikely in dense scenarios. ECBS remains more robust, sustaining higher success rates than CBS/PP thanks to its bounded-suboptimal, batch-search strategy. In contrast, \text{DTAPP-IICR} leverages soft constraints with iterative conflict resolution, allowing temporary conflicted paths that are later resolved, resulting in significantly higher success rates and reduced computation times. These results highlight \text{DTAPP-IICR}'s effectiveness for large-scale UAV traffic scenarios where flexibility and scalability are critical.

\begin{figure}[t]
    \centering
    \includegraphics[width=0.8\linewidth]{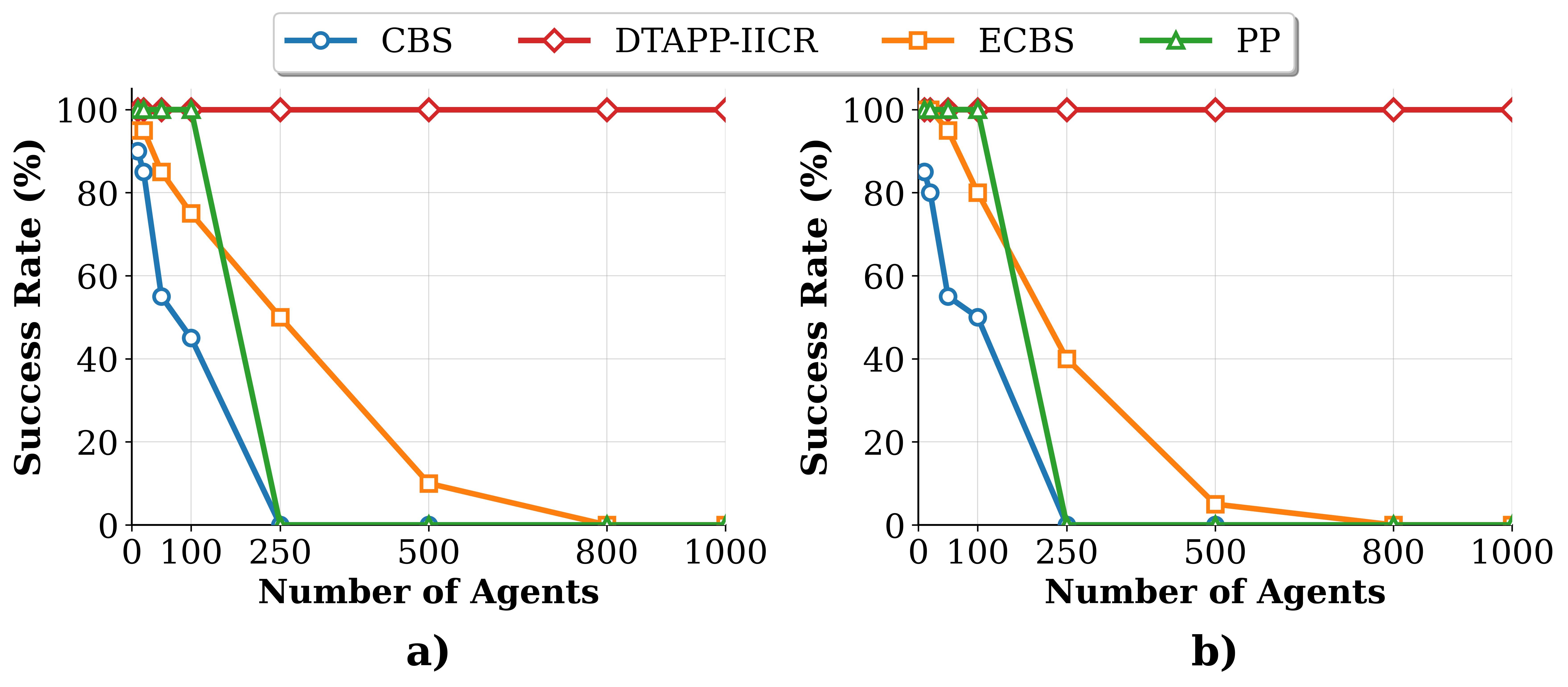}
    \caption{Scalability of \text{DTAPP-IICR} compared to PP, batch CBS/ECBS under varying obstacle densities. a) Using 5\% of obstacles. b) Using 10\% of obstacles.}
    \label{fig:scalability}
\end{figure}

\begin{table*}[h]
\centering
\small
\renewcommand{\arraystretch}{1.1}
\setlength{\tabcolsep}{5.0pt}
\caption{Success rate (\%) and runtime (s, mean ± std) for different algorithms across NFZ settings and agent counts.}
\begin{tabular}{l l c c c c c c c c}
\toprule
\textbf{NFZ} & \textbf{Agents} &
\multicolumn{2}{c}{\textbf{DTAPP-IICR}} &
\multicolumn{2}{c}{\textbf{DTAPP-IICR (W/O)}} &
\multicolumn{2}{c}{\textbf{PP}} &
\multicolumn{2}{c}{\textbf{PP (W/O)}} \\
\cmidrule(lr){3-4} \cmidrule(lr){5-6} \cmidrule(lr){7-8} \cmidrule(lr){9-10}
 & & Success & Runtime & Success & Runtime & Success & Runtime & Success & Runtime \\
\midrule
\multirow{5}{*}{0}
 & 10  & \textbf{100.0} & \textbf{14.5 $\pm$ 8.7}  & \textbf{100.0} & 28.8 $\pm$ 17.1  & 90.0 & >44.2 $\pm$ 14.7  & 90.0 & >67.7 $\pm$ 27.2  \\
 & 50  & \textbf{100.0} & \textbf{91.7 $\pm$ 14.7}  & \textbf{100.0} & 191.1 $\pm$ 34.4  & 50.0 & >1142.8 $\pm$ 69.1  & 0.0 & 1200.0 $\pm$ 0.0  \\
 & 100 & \textbf{100.0} & \textbf{192.8 $\pm$ 27.2} & \textbf{100.0} & 412.3 $\pm$ 63.5 & 0.0 & 1200.0 $\pm$ 0.0 & 0.0 & 1200.0 $\pm$ 0.0\\
 & 200 & \textbf{100.0} & \textbf{401.2 $\pm$ 34.3} & \textbf{100.0} & 839.5 $\pm$ 85.9 & 0.0 & 1200.0 $\pm$ 0.0 & 0.0 & 1200.0 $\pm$ 0.0\\
 & 500 & \textbf{100.0} & \textbf{980.9 $\pm$ 67.7} & 0.0   & 1200.0 $\pm$ 0.0 & 0.0 & 1200.0 $\pm$ 0.0 & 0.0 & 1200.0 $\pm$ 0.0 \\
\midrule
\multirow{5}{*}{2}
 & 10  & \textbf{100.0} & \textbf{19.0 $\pm$ 7.9}   & \textbf{100.0} & 42.4 $\pm$ 17.8   & 95.0 & 54.2 $\pm$ 15.4   & 95.0 & 95.9 $\pm$ 31.7   \\
 & 50  & \textbf{100.0} & \textbf{117.6 $\pm$ 22.4} & \textbf{100.0} & 258.4 $\pm$ 57.9  & 15.0 & 1180.5 $\pm$ 40.4 & 0.0  & 1200.0 $\pm$ 0.0 \\
 & 100 & \textbf{100.0} & \textbf{253.8 $\pm$ 33.2} & \textbf{100.0} & 557.9 $\pm$ 79.9  & 0.0  & 1200.0 $\pm$ 0.0 & 0.0  & 1200.0 $\pm$ 0.0 \\
 & 200 & \textbf{100.0} & \textbf{509.7 $\pm$ 43.1} & 95.0  & >1065.6 $\pm$ 84.3 & 0.0  & 1200.0 $\pm$ 0.0 & 0.0  & 1200.0 $\pm$ 0.0 \\
 & 500 & \textbf{89.0}  & \textbf{1076.4 $\pm$ 51.7} & 0.0   & 1200.0 $\pm$ 0.0 & 0.0  & 1200.0 $\pm$ 0.0 & 0.0  & 1200.0 $\pm$ 0.0 \\
\midrule
\multirow{5}{*}{4}
 & 10  & \textbf{100.0} & \textbf{19.7 $\pm$ 10.1}  & \textbf{100.0} & 53.9 $\pm$ 24.8  & 95.0 & 54.4 $\pm$ 18.6  & 95.0 & 99.0 $\pm$ 38.1  \\
 & 50  & \textbf{100.0} & \textbf{116.9 $\pm$ 22.2} & 100.0 & 313.2 $\pm$ 55.6 & 5.0  & 1182.6 $\pm$ 52.9 & 0.0 & 1200.0 $\pm$ 0.0 \\
 & 100 & \textbf{100.0} & \textbf{235.8 $\pm$ 37.1} & 100.0 & 620.2 $\pm$ 89.7 & 0.0 & 1200.0 $\pm$ 0.0 & 0.0 & 1200.0 $\pm$ 0.0 \\
 & 200 & \textbf{100.0} & \textbf{490.4 $\pm$ 35.9} & 40.0  & >1163.1 $\pm$ 54.1 & 0.0 & 1200.0 $\pm$ 0.0 & 0.0 & 1200.0 $\pm$ 0.0 \\
 & 500 & \textbf{30.0}  & \textbf{1195.4 $\pm$ 13.7} & 0.0   & 1200.0 $\pm$ 0.0 & 0.0 & 1200.0 $\pm$ 0.0 & 0.0 & 1200.0 $\pm$ 0.0 \\
\bottomrule
\end{tabular}
\label{tab:algo_agents_nfz}
\end{table*}

\paragraph{\textbf{Experiment 3: Dynamic Constraint Handling}}
We evaluate robustness under dynamic airspace conditions by introducing temporal NFZs that are active only for specific time intervals. For each environment (5\% obstacles), we fix the obstacle layout and agent scenarios while varying the number of NFZs in {1, 2, 4}. Each NFZ is activated over a random interval within the mission horizon. All results are averaged over 20 independent instances for every configuration. W/O denotes planner without pruning. The results show that \text{DTAPP-IICR} performs the best when the number of NFZs increases compared to other planners (Fig.~\ref{fig:nfz} a). The runtime grows with more activated NFZs, and pruning provides substantial speedups for \text{DTAPP-IICR} up to about 50\% faster than \text{DTAPP-IICR (W/O)} (Fig.~\ref{fig:nfz} b).

\begin{figure}[t]
            \centering
            \includegraphics[width=0.8\linewidth]{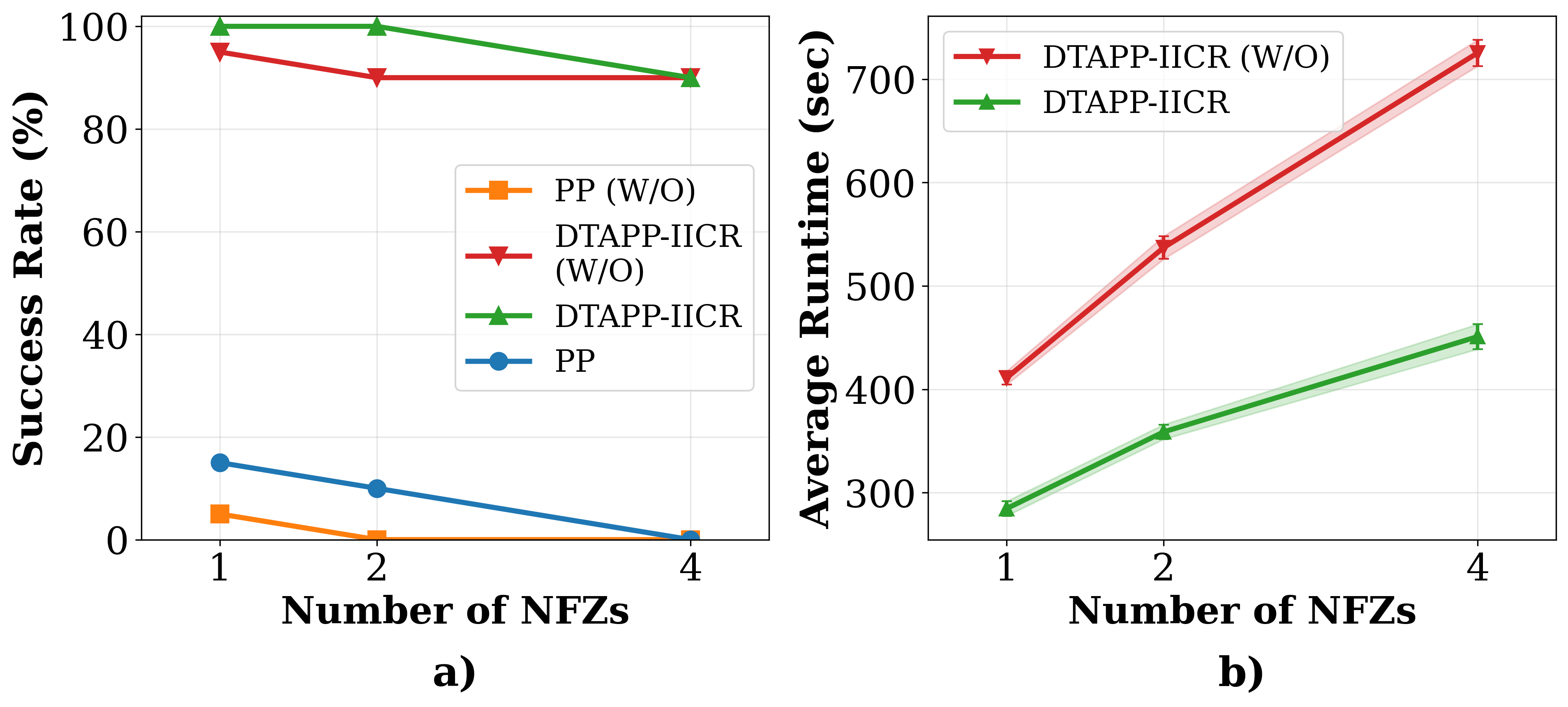}
            \caption{Success Rate and Average Runtime for PP, PP (W/O), DTAPP-IICR, and DTAPP-IICR (W/O). (a) 100 agents with 5\% obstacles, with a time limit of 500 seconds. (b) 1000 agents with 5\% obstacles, with a time limit of 900 seconds.}
            \label{fig:nfz}
        \end{figure}

\paragraph{\textbf{Experiment 4: Realistic Urban Use Case Simulation}}
To demonstrate applicability to real-world conditions, we simulate city-scale UAV delivery scenarios using an urban environment in UNetyEmu simulator~\cite{rodriguez_sbrc_25,rodriguez_3748436}. The environment dimensions are $400 \times 400 \times 25$ with three distributed delivery hubs positioned to create realistic traffic patterns with both intra-hub and inter-hub missions. We evaluate three configurations with \{0, 2, 4\} temporal NFZs, where each NFZ consists of 500-2000 voxels positioned in the upper airspace ($z \geq 12$) to avoid ground obstacles. NFZ activation intervals are randomly distributed between 0-1800 seconds (30 min) with durations of 200-600 seconds (3-10 min), simulating dynamic airspace restrictions. Agent operations span 0-900 seconds (15 min) with heterogeneous properties including speeds of 1.5-3.0 m/s and radius of 1.0-2.0 m. For each configuration, we generate 20 scenarios and test four algorithm variants: PP, PP (W/O), \text{DTAPP-IICR}, and \text{DTAPP-IICR} (W/O) across varying agent counts (10, 50, 100, 200, 500). The time limit is fixed at 1200s (20min).
The results are presented in Table~\ref{tab:algo_agents_nfz}. Without NFZ, \text{DTAPP-IICR} maintains 100\% success rates across all configurations while \text{DTAPP-IICR} (W/O) fails to complete 500-agent operations. In contrast, PP algorithms show significant scalability limitations, achieving only 50\% success at 50 agents with no NFZs and failing completely at large scale, while PP (W/O) fails at 50+ agents across all NFZ configurations. 
As the number of NFZs increases, the runtime of \text{DTAPP-IICR} also increases, reflecting the additional computational effort required for safe navigation around restricted zones. For instance, with 4 NFZs, \text{DTAPP-IICR} achieves a 30\% success rate at 500 agents, indicating that the algorithm remains functional but requires more time. Increasing the allowed runtime, which is feasible in UTM scenarios where operation requests can be submitted to a central controller in advance, would likely improve success rates further.
Overall, the pruning optimization provides substantial improvements, with \text{DTAPP-IICR} consistently outperforming \text{DTAPP-IICR} (W/O) by 2-3 times in runtime. These results indicate that \text{DTAPP-IICR}'s soft constraint handling and iterative conflict resolution approach provides better scalability for large-scale and dynamic urban environments compared to traditional priority-based methods with hard obstacle treatment, which become impractical beyond small-scale deployments.

\section{Conclusion}
This work addressed the critical challenge of scalable preflight planning for heterogeneous UAV fleets operating in shared, dynamic airspace with temporal NFZs. We introduced \text{DTAPP-IICR}, a novel framework that integrates urgency-aware prioritization, iterative conflict resolution, and efficient 4D path planning. Our key contributions include SFIPP-ST, a single-agent planner that handles temporal NFZs and hard obstacles while supporting heterogeneous UAV profiles (varying speeds, sizes, and start times) and modeling inter-agent conflicts through continuous geometric cost functions. Furthermore, our iterative conflict resolution mechanism leverages LNS guided by geometric conflict graphs to systematically resolve residual conflicts, while directional pruning accelerates search in 3D grids by up to 50\% without sacrificing completeness. 

Experimental validation across Monte Carlo simulations and realistic urban scenarios demonstrates that \text{DTAPP-IICR}'s superior scalability and robustness compared to the previous batch CBS/ECBS method in the UTM context. These results establish \text{DTAPP-IICR} as a foundational solution for safe, efficient UAV integration in dense urban airspace under regulatory constraints.

While our current method prioritizes finding conflict-free solutions, future works will explore advanced neighborhood selection strategies inspired by LNS~\cite{LNS, LNS-ORI, LNS-comb}. This includes designing a set of heuristics tailored for 4D environments and implementing an adaptive learning mechanism, such as a multi-armed bandit~(MAB), to dynamically choose the most effective heuristic during the search~\cite{Balance24, ADDRESS}. This will allow the planner to not only ensure safety but also optimize for operational efficiency. Another promising direction would be to integrate SDP/HMAPP~\cite{hmapp,sdp} techniques that decompose the environment into smaller sub-regions, enabling scalable planning across very large fleets.

\section*{Acknowledgments}
This work is supported in part by Sweden’s Innovation Agency Vinnova (Projects 2022-02671 DyMuDRoP and  2024-01322 AHA-IMPUT), the Excellence Center at Linköping-Lund in Information Technology (ELLIIT), Ericsson Telecomunicações Ltda., and the São Paulo Research Foundation (FAPESP), grant 2021/00199-8, CPE SMARTNESS.

\bibliographystyle{unsrt}  
\bibliography{references}  

@book{EASA,
    author = {European Union Aviation Safety Agency},
    title = {Commission Implementing Regulation (EU) 2019/947 of 24 May 2019 on the rules and procedures for the operation of unmanned aircraft},
    publisher = {Official Journal of the European Union},
    year = {2019},
    url={http://data.europa.eu/eli/reg_impl/2019/947/oj},
    address = {European Union}
}

@book{EASA1,
    author = {European Union Aviation Safety Agency},
    title = {Commission Implementing Regulation (EU) 2021/664 of 22 April 2021 on a regulatory framework for the U-space},
    publisher = {Official Journal of the European Union},
    year = {2021},
    url = {http://data.europa.eu/eli/reg_impl/2021/664/oj},
    address = {European Union}
}

@book{SESAR,
    author = {Single European Sky ATM Research 3 Joint Undertaking},
    title = {European drones outlook study – Unlocking the value for Europe},
    publisher = {Publications Office of the European Union},
    year = {2017},
    doi = {10.2829/085259},
    address = {European Union}
}

@article{CBS,
    title = {Conflict-based search for optimal multi-agent pathfinding},
    journal = {Artificial Intelligence},
    volume = {219},
    pages = {40-66},
    year = {2015},
    issn = {0004-3702},
    doi = {https://doi.org/10.1016/j.artint.2014.11.006},
    url = {https://www.sciencedirect.com/science/article/pii/S0004370214001386},
    author = {Guni Sharon and Roni Stern and Ariel Felner and Nathan R. Sturtevant}
}

@inproceedings{Extended-MAPF,
    author = {Ho, Florence and Salta, Ana and Geraldes, Ruben and Goncalves, Artur and Cavazza, Marc and Prendinger, Helmut},
    title = {Multi-Agent Path Finding for UAV Traffic Management},
    year = {2019},
    isbn = {9781450363099},
    publisher = {International Foundation for Autonomous Agents and Multiagent Systems},
    address = {Richland, SC},
    booktitle = {Proceedings of the 18th International Conference on Autonomous Agents and MultiAgent Systems},
    pages = {131–139},
    numpages = {9},
    doi={10.5555/3306127.3331684}
}

@inproceedings{LNS,
  title     = {Anytime Multi-Agent Path Finding via Large Neighborhood Search},
  author    = {Li, Jiaoyang and Chen, Zhe and Harabor, Daniel and Stuckey, Peter J. and Koenig, Sven},
  booktitle = {Proceedings of the Thirtieth International Joint Conference on Artificial Intelligence (IJCAI)},
  pages     = {4127--4135},
  publisher = {Association for the Advancement of Artificial Intelligence (AAAI)},
  address = {Montreal, Canada},
  year      = {2021},
  month     = {8},
  doi       = {10.24963/ijcai.2021/568},
  url       = {https://doi.org/10.24963/ijcai.2021/568},
}

@article{LNS-comb,
    title = {A survey of adaptive large neighborhood search algorithms and applications},
    journal = {Computers \& Operations Research},
    volume = {146},
    pages = {105903},
    year = {2022},
    issn = {0305-0548},
    doi = {https://doi.org/10.1016/j.cor.2022.105903},
    url = {https://www.sciencedirect.com/science/article/pii/S0305054822001654},
    author = {Setyo Tri {Windras Mara} and Rachmadi Norcahyo and Panca Jodiawan and Luluk Lusiantoro and Achmad Pratama Rifai}
}

@article{MLLNS, 
    title={Anytime Multi-Agent Path Finding via Machine Learning-Guided Large Neighborhood Search}, 
    volume={36}, 
    url={https://ojs.aaai.org/index.php/AAAI/article/view/21168}, 
    DOI={10.1609/aaai.v36i9.21168}, 
    number={9}, 
    journal={Proceedings of the AAAI Conference on Artificial Intelligence}, 
    author={Huang, Taoan and Li, Jiaoyang and Koenig, Sven and Dilkina, Bistra}, 
    year={2022}, 
    month={Jun.}, 
    pages={9368-9376} 
}

@Inbook{MAPF,
    author="Stern, Roni",
    title="Multi-Agent Path Finding -- An Overview",
    bookTitle="Artificial Intelligence: 5th RAAI Summer School, Dolgoprudny, Russia, July 4--7, 2019, Tutorial Lectures",
    year="2019",
    publisher="Springer International Publishing",
    address="Cham",
    pages="96--115",
    isbn="978-3-030-33274-7",
    doi="10.1007/978-3-030-33274-7_6",
    url="https://doi.org/10.1007/978-3-030-33274-7_6"
}

@InProceedings{LNS-ORI,
    author="Shaw, Paul",
    editor="Maher, Michael
    and Puget, Jean-Francois",
    title="Using Constraint Programming and Local Search Methods to Solve Vehicle Routing Problems",
    booktitle="Principles and Practice of Constraint Programming --- CP98",
    year="1998",
    publisher="Springer Berlin Heidelberg",
    address="Berlin, Heidelberg",
    pages="417--431",
    isbn="978-3-540-49481-2",
    doi="https://doi.org/10.1007/3-540-49481-2_30"
}

@INPROCEEDINGS{SIPP,
  author={Phillips, Mike and Likhachev, Maxim},
  booktitle={2011 IEEE International Conference on Robotics and Automation}, 
  title={SIPP: Safe interval path planning for dynamic environments}, 
  publisher = {IEEE},
  address = {Shanghai, China},
  year={2011},
  volume={},
  number={},
  pages={5628-5635},
  doi={10.1109/ICRA.2011.5980306}
}

@inproceedings{ICBS,
    author = {Eli Boyarski and Ariel Felner and Roni Stern and Guni Sharon and David Tolpin and Oded Betzalel and Eyal Shimony},
    title = { ICBS: Improved Conflict-Based Search Algorithm for Multi-Agent Pathfinding},
    booktitle = {Proceedings of the Twenty-Fourth International Joint Conference on Artificial Intelligence (IJCAI)},
    publisher = {IJCAI},
    address = {South America},
    pages={740-746},
    year = {2015},
    url = {https://www.ijcai.org/Proceedings/15/Papers/110.pdf}
}

@article{EECBS,
    title={EECBS: A Bounded-Suboptimal Search for Multi-Agent Path Finding}, 
    volume={35}, 
    url={https://ojs.aaai.org/index.php/AAAI/article/view/17466}, 
    DOI={10.1609/aaai.v35i14.17466},
    number={14}, 
    journal={Proceedings of the AAAI Conference on Artificial Intelligence}, 
    author={Li, Jiaoyang and Ruml, Wheeler and Koenig, Sven}, 
    year={2021}, 
    month={May}, 
    pages={12353-12362} 
}

@inproceedings{ECBS,
    author = {Max Barer and Guni Sharon and Roni Stern and Ariel Felner},
    title = {Suboptimal Variants of the Conflict-Based Search Algorithm for the Multi-Agent Pathfinding Problem},
    booktitle = {Proceedings of the Seventh Annual Symposium on Combinatorial Search (SoCS)},
    publisher = {SoCS},
    address = {NLD},
    pages = {961–962},
    year = {2014},
    url = {https://doi.org/10.1609/socs.v5i1.18315}
}

@article{LNS2, 
    title={MAPF-LNS2: Fast Repairing for Multi-Agent Path Finding via Large Neighborhood Search}, 
    volume={36}, 
    url={https://ojs.aaai.org/index.php/AAAI/article/view/21266}, 
    DOI={10.1609/aaai.v36i9.21266}, 
    number={9}, 
    journal={Proceedings of the AAAI Conference on Artificial Intelligence}, 
    author={Li, Jiaoyang and Chen, Zhe and Harabor, Daniel and Stuckey, Peter J. and Koenig, Sven}, 
    year={2022}, 
    month={Jun.}, 
    pages={10256-10265} 
}

@article{ADDRESS, 
    title={Anytime Multi-Agent Path Finding with an Adaptive Delay-Based Heuristic}, 
    volume={39}, 
    url={https://ojs.aaai.org/index.php/AAAI/article/view/34495}, 
    DOI={10.1609/aaai.v39i22.34495}, 
    number={22}, 
    journal={Proceedings of the AAAI Conference on Artificial Intelligence}, 
    author={Phan, Thomy and Zhang, Benran and Chan, Shao-Hung and Koenig, Sven}, 
    year={2025}, 
    month={Apr.}, 
    pages={23286-23294}
}

@book{UTM-USA,
    author = {Federal Administration Aviation},
    title = {FAA Aerospace Forecast Fiscal Years 2023–2043},
    publisher = {Federal Administration Aviation},
    url = {https://www.faa.gov/dataresearch/aviation/aerospaceforecasts/faa-aerospace-forecast-fy-2023-2043},
    year = {2023},
    address = {Washington, DC}
}

@book{UTM-USA1,
    author = {FAA Aerospace Forecast Fiscal Years 2024–2044},
    title = {Emerging Aviation Entrants: Unmanned Aircraft System and Advanced Air Mobility},
    publisher = {FAA Aerospace Forecast},
    url = {https://www.faa.gov/dataresearch/aviation/aerospaceforecasts/uas-and-aam.pdf},
    year = {2024},
    address = {Washington, DC}
}

@article{UTM-CHINA,
    author = {Civil Aviation Administration of China},
    title = {Statistical Bulletin of Civil Aviation Industry Development in 2023},
    journal = {China Civil Aviation Annual Report},
    pages = {12},
    number = {1},
    volume = {1},
    year = {2023},
    url = {https://www.caac.gov.cn/English/Research/Reports/Statistical/202412/t20241211_226085.html}
}

@article{Balance24, 
    title={Adaptive Anytime Multi-Agent Path Finding Using Bandit-Based Large Neighborhood Search}, 
    volume={38}, 
    url={https://ojs.aaai.org/index.php/AAAI/article/view/29701}, 
    DOI={10.1609/aaai.v38i16.29701}, 
    number={16}, 
    journal={Proceedings of the AAAI Conference on Artificial Intelligence}, 
    author={Phan, Thomy and Huang, Taoan and Dilkina, Bistra and Koenig, Sven}, 
    year={2024}, 
    month={Mar}, 
    pages={17514-17522} 
}

@INPROCEEDINGS{PP,
    author={Erdmann, M. and Lozano-Perez, T.},
    booktitle={Proceedings. 1986 IEEE International Conference on Robotics and Automation}, 
    title={On multiple moving objects}, 
    year={1986},
    volume={3},
    number={},
    publisher = {IEEE},
    address = {San Francisco, CA},
    pages={1419-1424},
    doi={10.1109/ROBOT.1986.1087401}
}

@article{ma2019searching, 
    title={Searching with Consistent Prioritization for Multi-Agent Path Finding}, 
    volume={33}, 
    url={https://ojs.aaai.org/index.php/AAAI/article/view/4758}, 
    DOI={10.1609/aaai.v33i01.33017643}, 
    number={01}, 
    journal={Proceedings of the AAAI Conference on Artificial Intelligence}, 
    author={Ma, Hang and Harabor, Daniel and Stuckey, Peter J. and Li, Jiaoyang and Koenig, Sven}, 
    year={2019}, 
    month={Jul.}, 
    pages={7643-7650} 
}

@inproceedings{rodriguez_sbrc_25,
    author = {Mauricio Rodriguez Cesen and Ariel Góes de Castro and Ibini Santana and Ramon Fontes and Fabricio R. Cesen and Christian Esteve Rothenberg},
    title = { UNetyEmu: Unity-based simulator for aerial and non-aerial vehicles with integrated network emulation},
    booktitle = {Anais Estendidos do XLIII Simpósio Brasileiro de Redes de Computadores e Sistemas Distribuídos},
    location = {Natal/RN},
    year = {2025},
    issn = {2177-9384},
    pages = {100--111},
    publisher = {SBC},
    address = {Porto Alegre, RS, Brasil},
    doi = {10.5753/sbrc_estendido.2025.7122},
    url = {https://sol.sbc.org.br/index.php/sbrc_estendido/article/view/35864}
}

@inproceedings{rodriguez_3748436,
    author = {Rodriguez, Mauricio and de Castro, Ariel Goes and Fontes, Ramon and Rodriguez, Fabricio and Rothenberg, Christian},
    title = {An Integrated Framework for Network Emulation and Multi-vehicle Algorithm Testing},
    year = {2025},
    isbn = {9798400720260},
    publisher = {Association for Computing Machinery},
    address = {New York, NY, USA},
    url = {https://doi.org/10.1145/3744969.3748436},
    doi = {10.1145/3744969.3748436},
    booktitle = {Proceedings of the ACM SIGCOMM 2025 Posters and Demos},
    pages = {167–169},
    numpages = {3},
    location = {Coimbra, Portugal},
    series = {ACM SIGCOMM Posters and Demos '25}
}

@article{hmapp, 
    title={A Hierarchical Approach to Multi-Agent Path Finding}, 
    volume={12}, 
    url={https://ojs.aaai.org/index.php/SOCS/article/view/18586}, 
    DOI={10.1609/socs.v12i1.18586}, 
    number={1}, 
    journal={Proceedings of the International Symposium on Combinatorial Search}, 
    author={Zhang, Han and Yao, Mingze and Liu, Ziang and Li, Jiaoyang and Terr, Lucas and Chan, Shao-Hung and Kumar, T. K. Satish and Koenig, Sven}, 
    year={2021}, 
    month={Jul}, 
    pages={209-211} 
}

@article{sdp,
    author = {Wilt, Christopher and Botea, Adi},
    year = {2014},
    month = {05},
    pages = {332-340},
    title = {Spatially Distributed Multiagent Path Planning},
    volume = {24},
    journal = {Proceedings of the International Conference on Automated Planning and Scheduling},
    doi = {10.1609/icaps.v24i1.13618}
}

@inproceedings{RCCBS,
    author="Chen, Ming
    and He, Ning
    and Hong, Chen",
    title="Continuous Multi-Agent Path Finding for Drone Delivery",
    booktitle="Proceedings of the 2024 IEEE/CVF Conference on Computer Vision and Pattern Recognition (PRCV)",
    publisher = {Springer},
    address = {Singapore},
    year="2024",
    pages="129--141",
    url={https://doi.org/10.1007/978-981-97-8795-1_9}
}

@article{MAPF-HR,
    title = {Multi-Agent Path Finding with heterogeneous edges and roundtrips},
    journal = {Knowledge-Based Systems},
    volume = {234},
    pages = {107554},
    year = {2021},
    issn = {0950-7051},
    doi = {https://doi.org/10.1016/j.knosys.2021.107554},
    url = {https://www.sciencedirect.com/science/article/pii/S0950705121008169},
    author = {Bing Ai and Jiuchuan Jiang and Shoushui Yu and Yichuan Jiang}
}

@article{TakeoffSpeedAdjutement,
  title={Multi-Agent Path Finding in Unmanned Aircraft System Traffic Management With Scheduling and Speed Variation
},
  author={Florence Ho and Artur Alves Gonçalves and Bastien Rigault and Ruben Geraldes and Alexandre Chicharo and Marc Cavazza and Helmut Prendinger},
  journal={IEEE Intelligent Transportation Systems Magazine},
  year={2021},
  volume={14},
  pages={2--15},
}

@article{lastMilesDrone,
  title={Last-Mile Drone Delivery: Past, Present, and Future},
  author={Hossein Eskandaripour and  Enkhsaikhan Boldsaikhan},
  journal={Drones},
  volume={2},
  pages = {77},
  number={7},
  year={2023},
  url={https://doi.org/10.3390/drones7020077}
}

@article{DroneAgri,
  title={Drones in Precision Agriculture: A Comprehensive Review of Applications, Technologies, and Challenges},
  author={Ridha Guebsi and Sonia Mami and Karem Chokmani},
  journal={Drones},
  pages = {686},
  volume={8},
  number={11},
  year={2024},
  url={https://doi.org/10.3390/drones8110686}
}

@Article{SearchAndRescue,
    AUTHOR = {Lyu, Mingyang and Zhao, Yibo and Huang, Chao and Huang, Hailong},
    TITLE = {Unmanned Aerial Vehicles for Search and Rescue: A Survey},
    JOURNAL = {Remote Sensing},
    PAGES = {3266},
    VOLUME = {15},
    YEAR = {2023},
    NUMBER = {13},
    ARTICLE-NUMBER = {3266},
    URL = {https://www.mdpi.com/2072-4292/15/13/3266},
    ISSN = {2072-4292},
    DOI = {10.3390/rs15133266}
}

@article{Surveillance,
  title={RF/WiFi-based UAV surveillance systems: A systematic literature review},
  author={Igor Bisio and Chiara Garibotto and Halar Haleem and Fabio Lavagetto and Andrea Sciarrone},
  journal = {Internet of Things},
  pages = {101201},
  volume = {26},
  year={2024},
  url={https://doi.org/10.1016/j.iot.2024.101201}
}

@inproceedings{MAPD-Ma,
    author = {Ma, Hang and Li, Jiaoyang and Kumar, T.K. Satish and Koenig, Sven},
    title = {Lifelong Multi-Agent Path Finding for Online Pickup and Delivery Tasks},
    booktitle = {Proceedings of the 16th Conference on Autonomous Agents and MultiAgent Systems},
    year = {2017},
    publisher = {International Foundation for Autonomous Agents and Multiagent Systems},
    address = {Richland, SC},
    pages = {837–845},
    numpages = {9},
    location = {S\~{a}o Paulo, Brazil},
    series = {AAMAS '17}
}

\appendix

\section{Simulation}

\subsection{Simulator}

UNetyEmu is a framework that allows realistic and scalable experiments to be carried out with multiple vehicles (e.g., drones and cars) at the same time, integrating Unity's high-fidelity 3D simulation with Mininet-WiFi's real-time network emulation. The system enables the simultaneous simulation of multiple autonomous vehicles, both aerial and terrestrial, each with different characteristics, operating under dynamic communication conditions typical of smart city environments. This combination offers researchers a powerful and customizable tool for evaluating coordination strategies, network performance, and algorithms related to drone behavior, such as obstacle avoidance, route planning, and coverage, facilitating advanced studies in areas such as 5G vehicular communication, edge computing, and intelligent transportation systems \cite{rodriguez_sbrc_25,rodriguez_3748436}.

\subsection{City scale scenario}

In this work, we use the UNetyEmu simulator for external validation of our DTAPP-IICR algorithm. To do this, we consider that the drones have no communication and are fully autonomous, following a specific planned route. We used a 3D map at urban scale with three designated UAV centers, random delivery destinations on a $400 \times 400 \times 25$ map, and mobility restrictions such as No Fly Zones (NFZ). Figure~\ref{fig:UNetyEmu2} 
shows a screenshot of the simulator during the evaluation of the proposed algorithm using 100 drones in the scene, in which you can see all the routes taken by the drones during the simulation. In addition, Figure~\ref{fig:UNetyEmu1} shows another screenshot of the simulator in which a blue drone delivering a package can be seen at the bottom left. In front of it, on the ground, is a red DronePad that is assigned to deliver the package. Meanwhile, to its right, you can see a green drone with different physical characteristics that is making another delivery. Also, at the top of the image, you can see a red rectangle representing the NFZ during the simulation. You can find this demo at \url{https://github.com/amathsow/4DPlanning}.

\begin{figure}[t]
    \centering
    \includegraphics[width=0.7\linewidth]{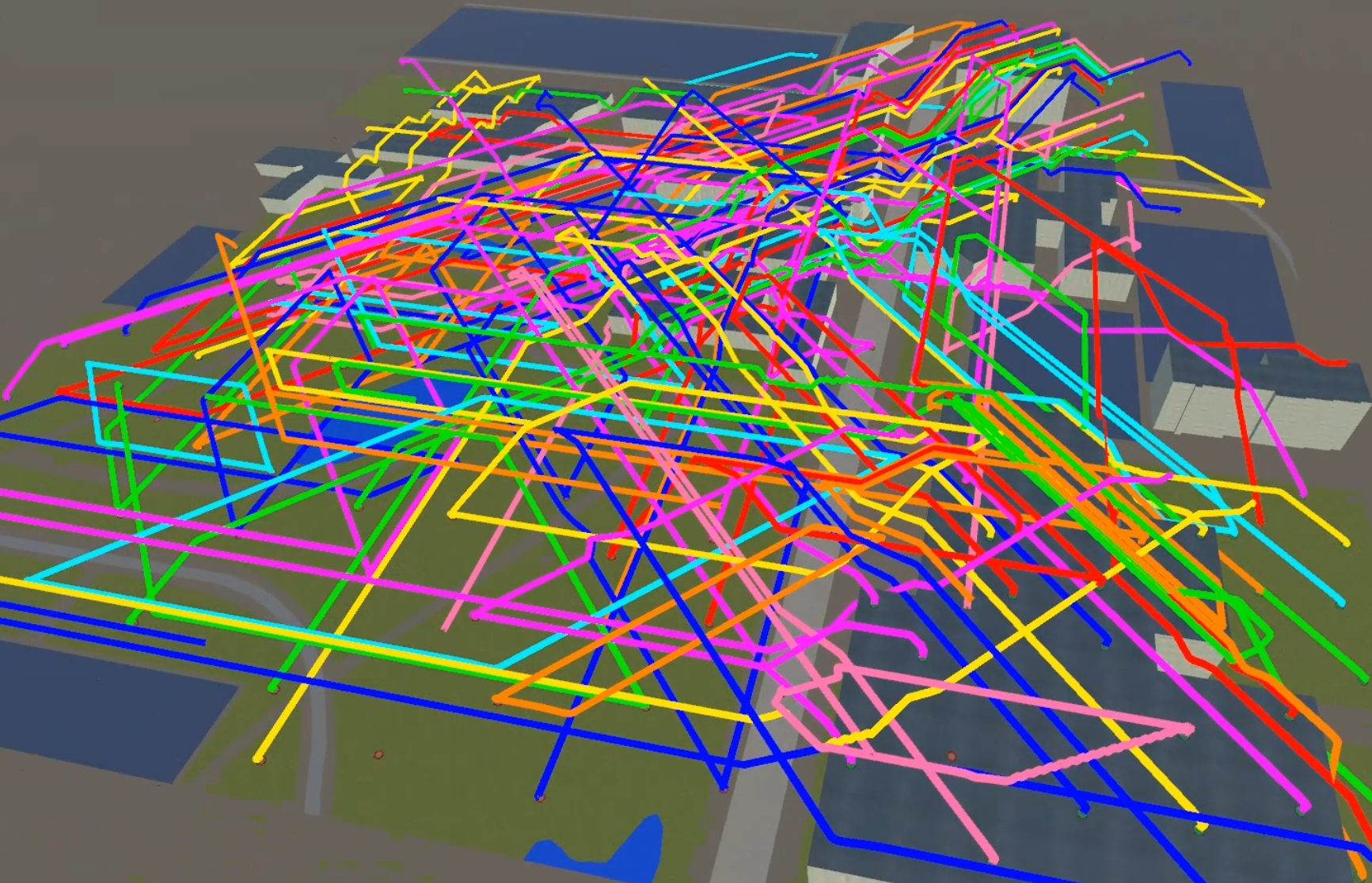}
    \caption{100 drone paths in an urban scene are shown in different colors when evaluating our DTAPP-IICR algorithm.}
    \label{fig:UNetyEmu2}
\end{figure}

\begin{figure}[t]
    \centering
    \includegraphics[width=0.7\linewidth]{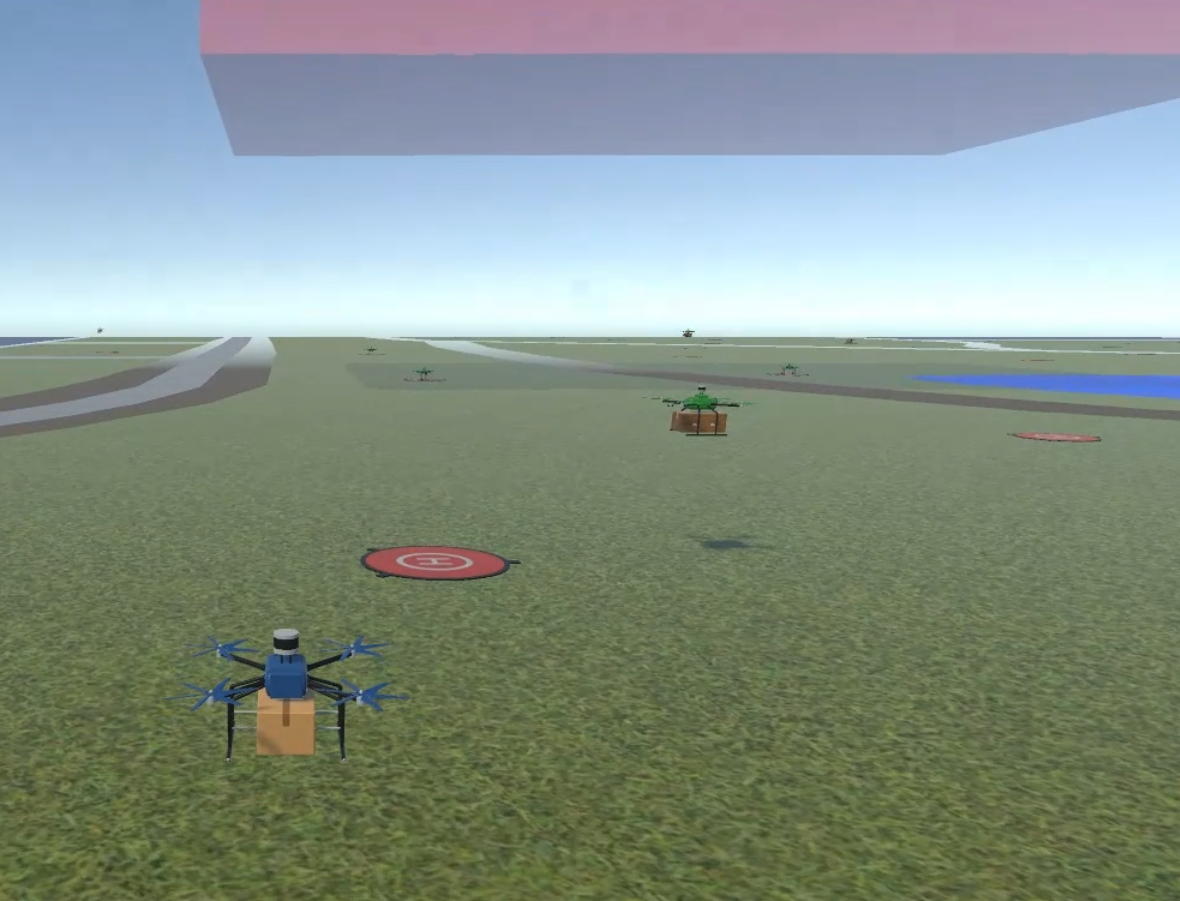}
    \caption{Different drones on the scene during the delivery of a package with mobility restrictions such as NFZs.}
    \label{fig:UNetyEmu1}
\end{figure}

\end{document}